\newcommand{\ie}{i.e.,\xspace}
\newtheorem{myTheo}{Theorem}
\newtheorem{myAssu}{Assumption}
\newtheorem{myLem}{Lemma}
\newtheorem{myPro}{Proposition}
\def\BibTeX{{\rm B\kern-.05em{\sc i\kern-.025em b}\kern-.08em
		T\kern-.1667em\lower.7ex\hbox{E}\kern-.125emX}}
\begin{document}
	\title{
    Deploying Large AI Models on Resource-Limited Devices with Split Federated Learning}
	\author{
		Xianke Qiang, Hongda Liu, Xinran Zhang, Zheng Chang,~\IEEEmembership{Senior~Member,~IEEE},Ying-Chang Liang,~\IEEEmembership{Fellow,~IEEE}\newline
		GitHub: \url{https://github.com/XiankeQiang/SFLAM}
		\thanks{X. Qiang, X. Zhang and Z. Chang are with School of Computer Science and Engineering, University of Electronic Science and Technology of China, Chengdu 611731, China. H. Liu is with the School of Electronics and Communication Engineering, the Shenzhen Campus of Sun Yat-sen University, Sun Yat-sen University, Shenzhen, 518107, China. Y-C. Liang is with Center for Intelligent Networking and Communications (CINC), University of Electronic Science and Technology of China, 611731 Chengdu, China.
}
	}
	
	\maketitle
	
	\begin{abstract}
		Large Artificial Intelligence Models (LAMs) powered by massive datasets, extensive parameter scales, and extensive computational resources, leading to significant transformations across various industries. Yet, their practical deployment on resource-limited mobile edge devices is hindered by critical challenges such as data privacy, constrained resources, and high overhead costs. Addressing this gap, this paper proposes a novel framework, named Quantized Split Federated Fine-Tuning Large AI Model (SFLAM). By partitioning the training load between edge devices and servers using a split learning paradigm, SFLAM can facilitate the operation of large models on devices and significantly lowers the memory requirements on edge devices.
		Additionally, SFLAM incorporates quantization management, power control, and bandwidth allocation strategies to enhance training efficiency while concurrently reducing energy consumption and communication latency. A theoretical analysis exploring the latency-energy trade-off is presented, and the framework's efficacy is validated via comprehensive simulations. The findings indicate that SFLAM achieves superior performance in terms of learning efficiency and scalability compared to conventional methods, thereby providing a valuable approach for enabling advanced AI services in resource-constrained scenarios.
		
	\end{abstract}

	\begin{IEEEkeywords}
		Large artificial models, vision transformer, split federated learning, fine-tuning, quantization.
	\end{IEEEkeywords}

	\section{Introduction}

    The advent of Large AI Models (LAMs), such as ChatGPT and DeepSeek, marked a significant leap in AI capabilities, powered by their extensive parameter scales, large-scale datasets, and substantial computational resources \cite{10579546}. These models exhibit exceptional generalization and adaptability, proving crucial across diverse applications. As user demand for ubiquitous AI access and real-time, personalized experiences grows, deploying and training these models on mobile devices becomes increasingly relevant \cite{10.1145/3641512.3686358}. To meet these escalating demands, fine-tuning, which involves adapting pre-trained models with domain-specific data, has become a widely adopted and efficient strategy for enhancing LAM performance on specialized tasks, offering a cost-effective path to superior results. \par

 Fine-tuning large AI models presents two key challenges: data scarcity and resource limitations. While the proliferation of mobile edge devices and advancements in sensing technologies facilitate distributed data collection, privacy concerns, particularly in sensitive domains such as healthcare \cite{thirunavukarasu2023large} and finance \cite{wu2023bloomberggpt}, often restrict direct data sharing. This necessitates collaborative training methodologies that leverage distributed data while safeguarding privacy. Federated Learning (FL) offers a solution by enabling multiple data owners to collaboratively fine-tune LAMs without exchanging raw data \cite{cai2023efficient,dayan2021federated, 10.1145/3637528.3671582}. Nevertheless, the increasing scale of LAMs presents substantial computational and storage demands. Many real-world edge devices lack the requisite processing capacity and memory to support FL, thus restricting its deployment in mobile environments.\par
 
      \begin{figure}[t]
    	\centering
    	\includegraphics[width=0.45\textwidth]{./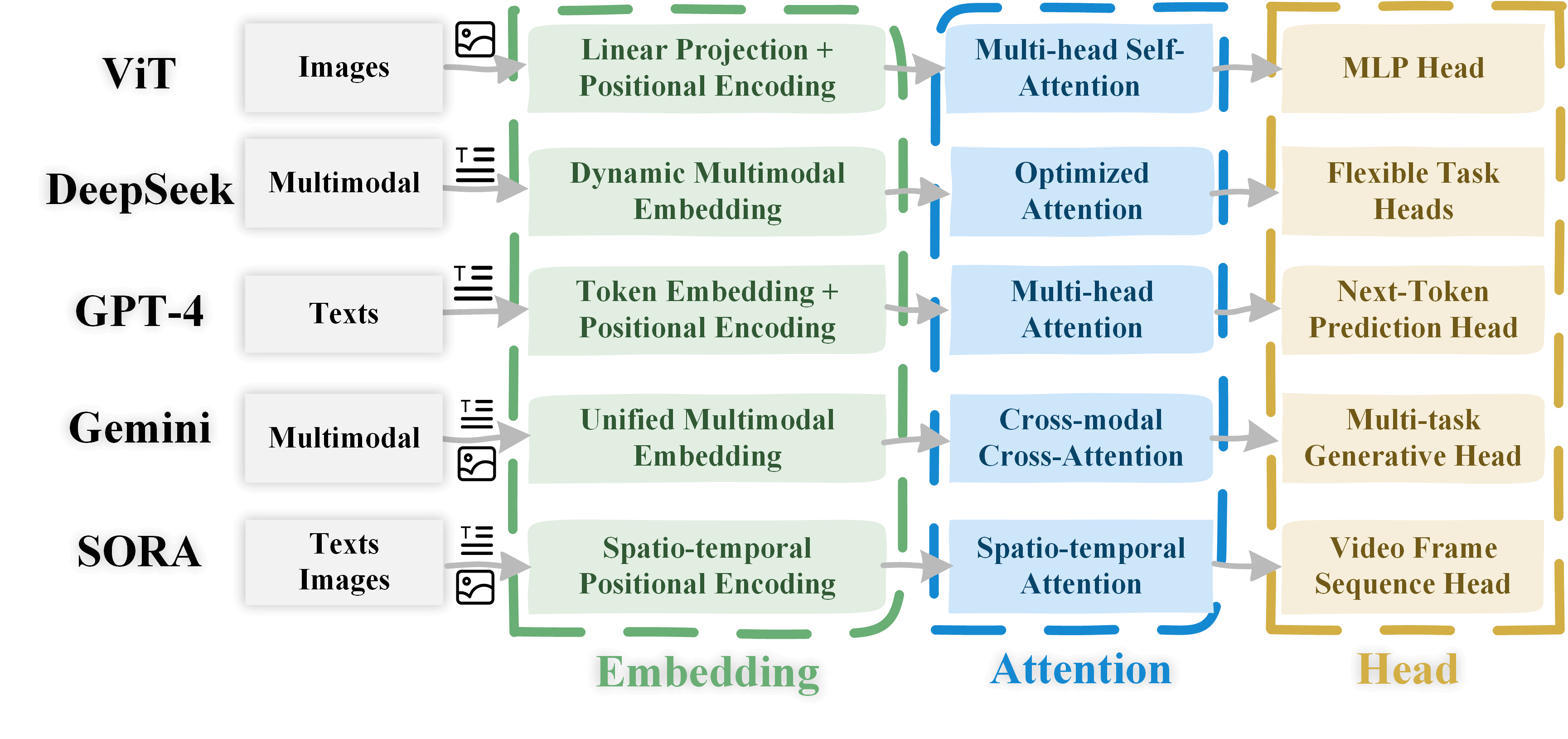}
    	\caption{The architecture of LAMs.}
    	\label{fig1}
    \end{figure}
 Split Federated Learning (SFL) presents a viable approach by partitioning model training between edge devices and a central server \cite{thapa2022splitfed, 10714368, 10839234}. Devices compute only the initial model layers, forwarding intermediate activations to the server for subsequent processing. This strategy significantly reduces device computational load, enabling the deployment of Large AI Models (LAMs) on resource-constrained devices. By integrating SFL with fine-tuning techniques, not only are privacy concerns mitigated, but large models can also be effectively executed on resource-constrained devices. However, SFL introduces communication overhead as a key challenge, due to the substantial bandwidth required for transmitting intermediate activations, device weights, and gradients. \par 

   
 Modern LAM architectures, such as Vision Transformer (ViT), BERT, GPT, Gemini, and SORA, typically consist of three main components: an input-based embedding layer, an attention-based transformer module, and a task-specific output head (see Fig. \ref{fig1}). To mitigate communication bottlenecks, we propose to partition the model at the embedding layer, assigning the embedding layer to the device and the attention and output layers to the server. Taking ViT as an example, different model configurations lead to varying computation and communication demands. As shown in Table \ref{table0}, this approach effectively reduces the device's computation and memory footprint, making it suitable for resource-constrained edge devices (such as smartphones and Raspberry Pi). However, despite  the reduction in computational overhead, communication overhead remains a critical challenge, particularly in large-scale deployments.\par

        \begin{table}[t]
            \centering
            \caption{Comparison of Communication, Computation, and Memory Overhead for Devices Across Different Paradigms}
            \label{table0} 
            \scriptsize
            \begin{tabular}{lcccc}
                \toprule
                \textbf{} & CL & \multicolumn{2}{c}{FL} & SFL \\ 
                \textbf{} & {} & ViT-B/32 & ResNet50 & ViT-B/32 \\ 
                \midrule
                Communication (MB) \texorpdfstring{\textsuperscript{1}}{} & 36.67 & 333.64 & 89.96 & 27.688 \\ 
                Compute (GFLOPs)\texorpdfstring{\textsuperscript{2}}{} & 0 & 558.99 & 530.28 & 14.80 \\ 
                Memory (MB)\texorpdfstring{\textsuperscript{2}}{}  & 0 & 3643.23 & 10430.71 & 18.38 \\ 
                \bottomrule
            \end{tabular}
        \end{table}
        
        \begin{table}[t]
            \centering
            \caption{Comparison of Embedding Layer Characteristics Across ViT Variants}
            \scriptsize
            \begin{tabular}{lcccc}
                \toprule
                & ViT-B/32 & ViT-L/32 & ViT-B/16 & ViT-L/16 \\ 
                \midrule
                Parameters (MB) & 9.0 & 12.0 & 2.25 & 3.0 \\ 
                Compute (GFLOPs) & 14.80 & 19.73 & 14.80 & 19.73\\ 
                Memory (MB) & 18.38 & 24.50 & 74.00 & 98 \\ 
                Output Shape & (b,50,768) & (b,50,1024) & (b,197,768) & (b,197,1024) \\ 
                \bottomrule
            \end{tabular}
            \label{table1}
        \end{table}
    \footnotetext[1]{The communication overhead of different schemes is compared. For a $224 \times 224 \times 3$ image (4 bytes per pixel), the raw image size is $0.573\text{MB}$. After passing through the ViT-B/32 embedding layer, the intermediate data size becomes $0.146 \text{MB}$ per image. For one batch 128 images, transmitting the raw images costs $73.344\text{MB}$, while transmitting the ViT-B/32 model incurs $333.64\text{MB}$, using SFL reduces the overhead to approximately $9+18.688=27.688\text{MB}$.}
    \footnotetext[2]{The computational FLOPs and memory usage were measured using the tools available \href{https://github.com/facebookresearch/fvcore/blob/main/docs/flop_count.md}{here}.}
   
    To address communication bottlenecks, we introduce Quantized Split Federated Fine-Tuning for Large AI Models (SFLAM), a novel framework designed for efficient model training on resource-constrained devices over wireless networks. As shown in Table \ref{table1}, SFLAM enables devices to participate in training large-scale visual models, such as ViT, with memory footprints below 100MB. This advancement facilitates improved scalability and performance for LAM deployment at the network edge. This study pioneers a comprehensive investigation into the potential of quantized split federated fine-tuning. Our key contributions are outlined below:
        \begin{itemize}
            \item This paper introduces SFLAM, an energy-efficient and performance-aware Quantized Split Federated Fine-Tuning framework, designed to enable the deployment of LAMs on resource-constrained edge devices. By offloading some computations and leveraging quantization techniques, SFLAM significantly reduces computational, memory, and energy demands, facilitating effective LAM operation on devices within these limitations. 
            
            \item We conduct a convergence analysis of SFLAM and formulate an efficiency-driven multi-objective optimization problem, integrating resource allocation, quantization management, power control, and bandwidth allocation.
            
            \item The formulated optimization problem is a mixed-integer non-linear and non-convex problem, which is NP-hard and challenging to solve directly. To address this, we decompose the problem into three subproblems—optimal power control, optimal quantization management, and optimal bandwidth allocation—which are solved iteratively using Successive Convex Approximation (SCA), Greedy Algorithm, and the Perfect Matching Method, respectively.
            \item We validate the effectiveness of SFLAM through extensive simulations on multiple benchmark datasets. Compared to existing methods, SFLAM demonstrates superior efficiency and improved learning performance across heterogeneous resource-constrained devices.
        \end{itemize}
        The rest of the paper is organized as follows. In Section \ref{section2}, the related works are introduced. Section \ref{section3} introduces the SFLAM architecture, system model, and latency-energy analysis. Section \ref{section4} analyzes the convergence of SFL with quantized activation. Section \ref{section5} formulates the optimization problem, while Section \ref{section6} derives the optimal power control, quantization management, and bandwidth allocation. Section \ref{section7} presents the simulation results. Finally, Section \ref{section8} concludes the article.
	

	\section{Related Work}
        \label{section2}
            Modern Large AI Models (LAMs) are typically trained through a two-stage process: pre-training and fine-tuning. In the pre-training phase, models are trained on extensive, often publicly available datasets. Subsequently, fine-tuning adapts these pre-trained models to specific tasks using device-resident data. This pretrain-then-finetune paradigm, leveraging large-scale datasets like ImageNet \cite{deng2009imagenet} and OSCAR corpora \cite{abadji2021ungoliant}, has become prevalent for refining LAMs for downstream language and vision tasks, including text/image classification and generation \cite{cai2023efficient}. State-of-the-art (SOTA) LAMs employing this workflow are predominantly transformer-based architectures \cite{vaswani2017attention, bomze2010interior, dosovitskiy2020image}. To mitigate the substantial computational costs associated with fine-tuning these large models, Parameter-Efficient Fine-Tuning (PEFT) techniques, such as Adapter Tuning \cite{houlsby2019parameter}, Prompt Tuning \cite{li2023prompt}, and Low-Rank Adaptation (LoRA) \cite{hu2021lora}, have been developed, significantly reducing resource requirements.\par

            To achieve collaborative training over wireless networks, some work leverages FL to allow multiple users to collaboratively fine-tune LAMs without the need to share data. In these studies, the entire model is deployed on local devices for training, with model parameters or gradients aggregated at each round\cite{10820175, 10542529}. To further enhance efficiency, researchers have explored FL in conjunction with PEFT techniques. FedAdapter \cite{cai2023efficient} dynamically adjusts adapter configurations to accelerate model convergence. PromptFL \cite{10210127} enables federated participants to collaboratively train shared prompts instead of the full model.\par
            Split Learning (SL) offers another promising approach to reduce computational overhead and memory consumption by partitioning models across devices and servers \cite{thapa2022splitfed}. To accelerate SL training, parallel Split Federated Learning (SFL) frameworks have been proposed, enabling simultaneous training of multiple server-side and device-side models \cite{10714368, 10839234, 10.1007/978-3-031-43895-0_33,10947249}. These frameworks, such as ASFV \cite{10714368}, which adaptively splits the model and parallelizes training considering vehicle selection and resource allocation, and those that dynamically adjust the cut layer for parallel execution to optimize performance under non-IID data \cite{10839234}, aim to enhance training efficiency. Additionally, FeSViBS\cite{10.1007/978-3-031-43895-0_33} framework builds upon the existing federated split vision transformer and introduces a block sampling module. A recent U-shaped SFL framework \cite{10947249} employs semantic-aware auto-encoders and reinforcement learning to enhance privacy, communication, and performance in vehicular networks.\par
            
            While these approaches enable collaborative fine-tuning across distributed devices while reducing computational and communication costs, they still demand significant memory resources. Even with SOTA PEFT methods, pre-training and fine-tuning require approximately several GBs of memory \cite{liu2024sparse}, making them impractical for resource-constrained devices such as mobile phones and IoT devices. To address this challenge, we propose an energy-efficient and memory-free split federated learning framework for LAMs, specifically designed for distributed pre-training and fine-tuning. We further provide a theoretical convergence analysis and release our framework as open-source to promote further research in this area.

	\section{Split Federated Learning for LAMs: Architecture and System Model}
        \label{section3}
	In this section, we consider a typical LAMs training scenario over wireless network comprising  a server and multiple devices. Then we introduce the system architecture and SFLAM workflow.
        \begin{figure}[t]
		\centering
		\includegraphics[width=0.30\textwidth]{./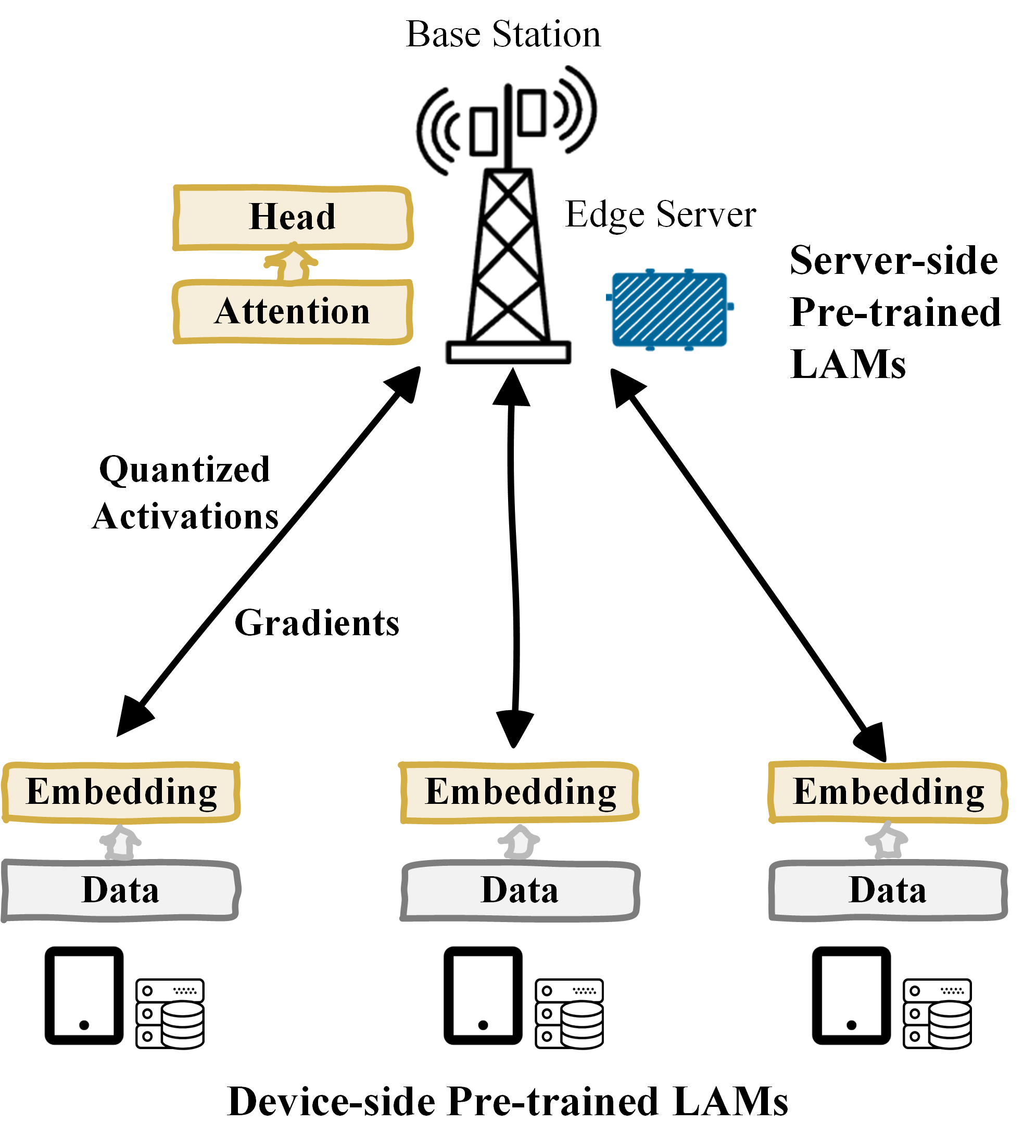}
		\caption{The framework of SFLAM.}
		\label{fig2}
	\end{figure}  
    
    \subsection{Split Federated Pre-training and Fine-tuning Framework}
    As shown in Fig. \ref{fig2}, we consider a typical SFLAM scenario over a wireless network comprising a server and a set of devices $\mathcal{V} = \{1,2, \dots,V\}$. We define the set of the global communication rounds as $\mathcal{K}=\{1,2,\dots,K\}$. Given the constraints of limited wireless resources, only a subset $\mathcal{N}_k \in \mathcal{V}$ of devices can participate in the SFLAM tasks, with $|\mathcal{N}_k| = N$. Each device $n \in \mathcal{N}_k$ collects its private data set $\mathcal{D}_n$ with the data size $D_n = |\mathcal{D}_n|$. In the supervised learning tasks, we have $\mathcal{D}_n=\{\mathcal{X}_n,\mathcal{Y}_n\}$, where $\mathcal{X}_n = \{x_{n,1},x_{n,2},\dots,x_{n,D_n}\}$ represents the training data and $\mathcal{Y}_n = \{y_{n,1},y_{n,2},\dots,y_{n,D_n}\}$ refers to the labels of the device $n$. This data can be utilized for various SFL pre-training and fine-tuning tasks, and all the devices jointly train a global SFL model guided by the server.\par
    
    In SFLAM, the LAM is split into two parts: the embedding layer remains on the device device, while the attention and head layers are handled by the server. As illustrated in Fig. \ref{fig2}, the training process consists of four steps. First, the device computes the forward pass of the embedding layer and transmits the quantized activations to the server. The server then processes these activations through its model and returns the corresponding gradients. Next, the device updates the embedding layer weights via backpropagation. Finally, the server aggregates updates from both sides to refine the global model.\par
    
    This SFLAM structure enables privacy-preserving patch embedding on the device, while offloading the complex transformer-based attention and head layers to the server. We decompose the whole transformer model as two parts in terms of the embedding layer $\boldsymbol{\omega}_n^\mathcal{C}$ of device $n$ as well as transformer-based attention and head layer $\boldsymbol{\omega}^\mathcal{S}$. Let $\boldsymbol{\omega} = \{\boldsymbol{\omega}_n^\mathcal{C},\boldsymbol{\omega}_n^\mathcal{S}\}\in \mathbb{R}^{\boldsymbol{d}}$ represents the whole model and $\boldsymbol{\ell}(\boldsymbol{\omega},x_n^{i})$ denotes the loss function of data sample $i$. For each local dataset $\mathcal{D}_n$, the local loss function is defined as $\boldsymbol{L}_n(\boldsymbol{\omega}) = \frac{1}{\left|\mathcal{D}_n\right|} \sum_{i=1}^{\left|\mathcal{D}_n\right|}\boldsymbol{\ell}(\boldsymbol{\omega},x_{n}^i)$. The objective of the training process is to minimize the global loss, which is defined as:
	\begin{equation}
		\min_{\boldsymbol{\omega}} \mathbf{L}(\boldsymbol{\omega})=\sum_{n \in \mathcal{V}} \rho_n a_n \boldsymbol{L}_n(\boldsymbol{\omega}),
	\end{equation}
	where $\rho_n = \frac{D_n}{\sum_{n \in \mathcal{N}_k}D_n}$. We define the local epochs as $\mathcal{I} = \{1,2,\cdots,I\}$. Then, by employing the local stochastic gradient descent (SGD), the device $n$ updates its device-side model parameters as
	\begin{align}
		\boldsymbol{\omega}_{n,k+1}^{\mathcal{C},0}&=\boldsymbol{\omega}_{k}^{\mathcal{C}}\\
		\boldsymbol{\omega}_{n,k+1}^{\mathcal{C},i} &= \boldsymbol{\omega}_{n,k+1}^{\mathcal{C},i} - \eta_k \nabla L_n(\boldsymbol{\omega}_{n,k+1}^{\mathcal{C},i-1}),
	\end{align}
	where $\eta_k$ is the learning rate and $\boldsymbol{\omega}_{k+1}^{\mathcal{C}}=\sum_{n \in \mathcal{N}_k} \rho_n  \boldsymbol{\omega}_{n,k+1}^{\mathcal{C},I}$. The server updates its model as
	\begin{align}
		\boldsymbol{\omega}_{n,k+1}^{\mathcal{S},0}&=\boldsymbol{\omega}_{k}^{\mathcal{S}}\\
		\boldsymbol{\omega}_{n,k+1}^{\mathcal{S},i} &= \boldsymbol{\omega}_{n,k+1}^{\mathcal{S},i} - \eta_k \nabla L_n(\boldsymbol{\omega}_{n,k+1}^{\mathcal{S},i-1}).
	\end{align}
	where the  $\boldsymbol{\omega}_{k+1}^{\mathcal{S}}=\sum_{n \in \mathcal{N}_k} \rho_n  \boldsymbol{\omega}_{n,k+1}^{\mathcal{S},I}$. The global model parameter for any iteration $k+1$ is defined as: $\boldsymbol{\omega}_{k+1}=\{\boldsymbol{\omega}_{k+1}^\mathcal{C}, \boldsymbol{\omega}_{k+1}^\mathcal{S}\}$. Then, by aggregating all the local weight differentials, we can obtain the global model parameters $\boldsymbol{\omega}_{k+1}$ by
	\begin{equation}
		\boldsymbol{\omega}_{k+1} = \sum_{n\in\mathcal{N}_k}\rho_{n,k} \boldsymbol{\omega}_{n,k},
	\end{equation}
	where $\mathcal{N}_k = \{n \mid a_{n,k} = 1, \forall n \in \mathcal{N} \}$, with $a_{n,k}$ representing the participation level (or probability) of device $n$. If $a_{n,k} = 1$, device $n$ is available in the $k$-th round. If $a_{n,k} = 0$, device $n$ is unavailable in the $k$-th round.

	
	\subsection{Quantization Model}
	The embedding layer activations, as highlighted in Fig. \ref{fig2}, introduce substantial overhead during SFLAM uplink communication. To achieve lower transmission latency, we quantize the intermediate communication data, including the embedding layer output vector A, effectively compressing it into a smaller representation.\par
	For a $\boldsymbol{d}$-dimensional intermediate communication vector $\mathcal{A}_{n,k}$, we denote $\mathcal{A}_{n,k}^{max}$ and $\mathcal{A}_{n,k}^{min}$ as the upper and lower bounds of the absolute values of embedding output vector. $\mathcal{A}_{n,k,d}(d \in \{1,2,\dots,|\boldsymbol{d}|\})$, such that $\mathcal{A}_{n,k}^{max}=\max\{|\mathcal{A}_{n,k,d}|\}$ and $\mathcal{A}_{n,k}^{min}=\min\{|\mathcal{A}_{n,k,d}|\}$. Then we have $\mathcal{A}_{n,k}^{min}\leq |\mathcal{A}_{n,k,d}| \leq \mathcal{A}_{n,k}^{max}$. We employ stochastic rounding in the quantization method, denoting $q_{n,k}$ as the quantization bits for the device $n$ at global round $k$. Then, there are $2^{q_{n,k}}$ integer values, and $[\mathcal{A}_{n,k}^{min},\mathcal{A}_{n,k}^{max}]$ can be divide into $2^{q_{n,k}}-1$ intervals, denoted as $\mathcal{X}_\phi = [\chi_\phi,\chi_{\phi+1}]$, where $\phi \in \{0,1,2,...,2^{q_{n,k}}-2\}$. The quantization function can be written as 
	\begin{equation}
		\mathcal{Q}(\mathcal{A}_{n,k}) = 
		\begin{cases}
			\text{sign}(\mathcal{Q}(\mathcal{A}_{n,k,d}))\cdot \chi_\phi & \text{w.p. } \iota\\
			\text{sign}(\mathcal{Q}(\mathcal{A}_{n,k,d}))\cdot \chi_{\phi+1} & \text{w.p. } (1-\iota)
		\end{cases}
	\end{equation}
	where we have $\iota = (\chi_{\phi+1} - \mathcal{A}_{n,k,d}/\chi_{\phi+1} - \chi_{\phi})$ and the sign function sign($\cdot$) yields the sign of $\mathcal{A}_{n,k,d}$ as $-1$ or $1$. Moreover, "w.p." is short for "with probability". Then, the quantized intermediate communication vector $\mathcal{Q}(\mathcal{A}_{n,k})=[\mathcal{Q}(\mathcal{A}_{n,k,1}),\mathcal{Q}(\mathcal{A}_{n,k,2}),\cdots,\mathcal{Q}(\mathcal{A}_{n,k,d})]$ is expressed by a total number of 
	\begin{equation}
		s_{n,k}= q_{n,k}|\boldsymbol{d}| + \mu \text{ bits}, 
		\label{num_bits}
	\end{equation}
	where $\mu$ is the sum of the number of bits to represent the sign, and the size of $\mathcal{A}_{n,k}^{min}$ and $\mathcal{A}_{n,k}^{max}$ \cite{9611373}. Utilizing the quantization function defined above, we can obtain the following lemma.
	\begin{myLem}(Unbiased Quantization Scheme \cite{li2017training,10038639})
    A randomized mapping $\mathcal{Q}: \mathbb{R}^d \rightarrow \mathbb{R}^d$ is an unbiased quantization scheme if there exists $\delta$ such that $\mathbb{E}\left[\mathcal{Q}(\mathcal{A})\right] = \mathcal{A}$, $\mathbb{E}\left[\|\mathcal{Q}(\mathcal{A})-\mathcal{A}\|_2^2\right]\leq \delta \|\mathcal{A}\|_2^2$.\par
		\label{lemma1}
	\end{myLem}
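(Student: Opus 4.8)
The plan is to verify that the stochastic-rounding quantizer $\mathcal{Q}$ defined above meets the two defining conditions of the lemma, namely unbiasedness $\mathbb{E}[\mathcal{Q}(\mathcal{A})]=\mathcal{A}$ and the relative second-moment bound $\mathbb{E}[\|\mathcal{Q}(\mathcal{A})-\mathcal{A}\|_2^2]\le\delta\|\mathcal{A}\|_2^2$ for a suitable constant $\delta$. Since the rounding is applied independently to each coordinate and both $\mathbb{E}[\cdot]$ and $\|\cdot\|_2^2$ decompose coordinatewise, I would first reduce everything to a single-coordinate statement and then sum over $d\in\{1,\dots,|\boldsymbol{d}|\}$.

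For unbiasedness, fix a coordinate whose magnitude lies in the interval $[\chi_\phi,\chi_{\phi+1}]$. The quantizer preserves the sign and takes the value $\chi_\phi$ with probability $\iota$ and $\chi_{\phi+1}$ with probability $1-\iota$, so $\mathbb{E}[|\mathcal{Q}(\mathcal{A}_{n,k,d})|]=\iota\chi_\phi+(1-\iota)\chi_{\phi+1}$. Substituting $\iota=(\chi_{\phi+1}-|\mathcal{A}_{n,k,d}|)/(\chi_{\phi+1}-\chi_\phi)$ and simplifying, the products $\chi_\phi\chi_{\phi+1}$ cancel and the expression collapses to $|\mathcal{A}_{n,k,d}|$; restoring the preserved sign gives $\mathbb{E}[\mathcal{Q}(\mathcal{A}_{n,k,d})]=\mathcal{A}_{n,k,d}$, and stacking the coordinates yields $\mathbb{E}[\mathcal{Q}(\mathcal{A})]=\mathcal{A}$.

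For the second-moment bound, I would exploit that an unbiased two-point random variable supported on $\{\chi_\phi,\chi_{\phi+1}\}$ has variance $\iota(1-\iota)(\chi_{\phi+1}-\chi_\phi)^2$, which is at most $\tfrac14(\chi_{\phi+1}-\chi_\phi)^2$ because $\iota(1-\iota)\le\tfrac14$. Hence the per-coordinate error obeys $\mathbb{E}[(\mathcal{Q}(\mathcal{A}_{n,k,d})-\mathcal{A}_{n,k,d})^2]\le\tfrac14(\chi_{\phi+1}-\chi_\phi)^2$. Bounding every gap by the uniform grid spacing $(\mathcal{A}_{n,k}^{max}-\mathcal{A}_{n,k}^{min})/(2^{q_{n,k}}-1)$ and summing over the $|\boldsymbol{d}|$ coordinates gives $\mathbb{E}[\|\mathcal{Q}(\mathcal{A})-\mathcal{A}\|_2^2]\le\tfrac{|\boldsymbol{d}|}{4}\big((\mathcal{A}_{n,k}^{max}-\mathcal{A}_{n,k}^{min})/(2^{q_{n,k}}-1)\big)^2$.

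The last and decisive step is turning this \emph{absolute} error bound into the \emph{relative} form $\delta\|\mathcal{A}\|_2^2$ demanded by the statement. Using $|\mathcal{A}_{n,k,d}|\ge\mathcal{A}_{n,k}^{min}$ for every coordinate gives $\|\mathcal{A}\|_2^2\ge|\boldsymbol{d}|(\mathcal{A}_{n,k}^{min})^2$, so the ratio of the two quantities is controlled by $\delta=\tfrac14\big((\mathcal{A}_{n,k}^{max}-\mathcal{A}_{n,k}^{min})/((2^{q_{n,k}}-1)\mathcal{A}_{n,k}^{min})\big)^2$, which is finite and independent of the particular $\mathcal{A}$. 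I expect this conversion to be the main obstacle: it implicitly requires $\mathcal{A}_{n,k}^{min}>0$ (otherwise the relative bound can blow up), and it is where the explicit dependence of $\delta$ on the bit budget $q_{n,k}$—the quantity later optimized—must be made precise. Should one wish to avoid the positivity assumption, an alternative is to invoke the QSGD-style estimate $\delta=\min\{|\boldsymbol{d}|/s^2,\sqrt{|\boldsymbol{d}|}/s\}$ with $s=2^{q_{n,k}}-1$, which rests on the same per-coordinate variance computation but normalizes directly by $\|\mathcal{A}\|_2$.
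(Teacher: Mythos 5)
First, a structural note: the paper itself supplies no proof of this lemma. It is imported by citation from \cite{li2017training,10038639} as a definition of an unbiased quantization scheme, and the only substantive claim attached to it is the follow-up sentence asserting that the stochastic-rounding quantizer admits $\delta=(1+\sqrt{2d-1})/(2(2^q-1))$. Your attempt to verify that claim from first principles is therefore a genuinely different route, and its first two steps are sound: the unbiasedness computation (the cross terms $\chi_\phi\chi_{\phi+1}$ cancel exactly because of the choice of $\iota$) and the per-coordinate variance bound $\iota(1-\iota)(\chi_{\phi+1}-\chi_\phi)^2\le\tfrac14(\chi_{\phi+1}-\chi_\phi)^2$ are both correct, and the coordinatewise decomposition is the right reduction.

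The gap is in your final normalization step, and it is genuine. The constant you produce, $\delta=\tfrac14\bigl((\mathcal{A}_{n,k}^{max}-\mathcal{A}_{n,k}^{min})/((2^{q_{n,k}}-1)\mathcal{A}_{n,k}^{min})\bigr)^2$, is \emph{not} ``independent of the particular $\mathcal{A}$'': $\mathcal{A}_{n,k}^{min}$ and $\mathcal{A}_{n,k}^{max}$ are themselves functions of $\mathcal{A}$ (the extreme absolute values of its coordinates), the ratio is unbounded as $\mathcal{A}$ ranges over $\mathbb{R}^d$, and the bound degenerates whenever some coordinate vanishes. If $\delta$ were allowed to depend on $\mathcal{A}$, the second condition of the lemma would be vacuous (any unbiased quantizer with finite pointwise variance satisfies it), and, more importantly, the downstream use in Theorem \ref{theorem1} requires a $\delta$ that is a fixed function of $(d,q)$ only. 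Your fallback to the QSGD estimate does not repair this as stated, because that bound is proved for a quantizer whose grid is normalized by $\|\mathcal{A}\|_2$, not for the min--max grid quantizer actually defined in the paper; one cannot transplant its $\delta$ without redoing the computation. The repair, however, is small and stays within your own argument: replace the lower bound $\|\mathcal{A}\|_2^2\ge|\boldsymbol{d}|(\mathcal{A}_{n,k}^{min})^2$ by $\|\mathcal{A}\|_2^2\ge(\mathcal{A}_{n,k}^{max})^2\ge(\mathcal{A}_{n,k}^{max}-\mathcal{A}_{n,k}^{min})^2$. Combined with your absolute error bound this yields
\begin{equation}
\mathbb{E}\bigl[\|\mathcal{Q}(\mathcal{A})-\mathcal{A}\|_2^2\bigr]\le\frac{|\boldsymbol{d}|}{4\,(2^{q_{n,k}}-1)^2}\,\|\mathcal{A}\|_2^2,\nonumber
\end{equation}
i.e., a uniform $\delta=|\boldsymbol{d}|/(4(2^{q_{n,k}}-1)^2)$ valid for every $\mathcal{A}\neq 0$, with no positivity assumption on $\mathcal{A}_{n,k}^{min}$. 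This proves the lemma as stated; note it still does not reproduce the paper's quoted constant $\delta=(1+\sqrt{2d-1})/(2(2^q-1))$, whose $1/(2^q-1)$ (rather than $1/(2^q-1)^2$) scaling pertains to the norm-normalized scheme of the cited references rather than to the quantizer defined in Section \ref{section3}.
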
 
     In general, each element in $\mathcal{A} \in \mathbb{R}^d$ will be independently quantized between $\arg \min \mathcal{A}$ and $\arg \max \mathcal{A}$ with the data precision of $q \leq 32$ bits. Let $q$ be the quantization strategy of activation quantization. The $\delta$ can be defined as $\delta = (1+\sqrt{2d-1})/(2(2^q-1))$, which bounds the quantization error in terms of the squared $L_2$-norm of the quantization error.
	
	\subsection{Latency-Energy Analysis}
    	\subsubsection{Device Computation}
    	Let $\gamma_d$ denote the computational workload (in FLOPs) required for training the device-side model (Embedding Layer) on a single data sample. Since the device-side model processes a mini-batch of data samples, the total computational workload is $b \gamma_d$, where $b$ is the mini-batch size. When device $n$ trains the device-side model locally, the computation delay can be expressed as:
    	\begin{equation}
    		T_n^{cmp} = \frac{b \gamma_d}{f_n C_n D_n},
    	\end{equation}
    \noindent where $f_n$ is the GPU frequency of user n, $C_n$ is the number of cores of the GPU at device $n$ and $D_n^U$ is the number of FLOPs per cycle per core of the GPU. The relationship between the GPU's power consumption and its clock speed is cubic, \ie $power= \kappa_1 f_n^3$. Here, $\kappa_1$ is the coefficient reflecting the power usage per cycle per second cubed (in Watt/(cycle/s)$^3$), depending on the specific GPU architecture \cite{10.1145/3641512.3686358}. Hence, when training one transformer layer, the energy expenditure for local computations is established as follows:
    	\begin{equation}
    		E_n^{cmp} = \kappa_1 f_n^3 \times T_n^{cmp} = \frac{\kappa_1 f_n^2 \gamma_d}{C_n D_n}.
    	\end{equation}
    	\subsubsection{Server Computation}
    	Let $\gamma_s$ denote the computation workload (in Flops) of server-side model's (Attention Layer and Head Layer) training process for processing a data sample. The server-side model execution needs to process a mini-batch of data samples, and the overall computation workload is $b \gamma_s$. When server trains server-side model, the time taken for the computation can be expressed as follows:
    	\begin{equation}
    		T_s^{cmp} = \frac{b \gamma_s}{f_sC_sD_s},
    	\end{equation}
    	\noindent where $f_s$ denotes the GPU frequency of server, and $C_s$ represents the total core count of the GPU of server, and $D_s$ signifies the computational capability oif each core, measured in floating point operations per cycle, for the GPU located at server. The energy used for server-side model training and downlink transmission from the server to devices is not considered, due to server has sufficient energy compared with devices.
    	
    	\subsubsection{Device-Server Communication} 
    	The communication stage contains uplink and downlink stages. The download time can be neglected compared to the upload time, as assumed in \cite{10542529,9611373,10038639}, because the downlink transmit power of the RSU is considerably greater than the uplink powers of vehicles and higher downlink bandwidth is occupied by the RSU for data distribution. So, in this paper, we only consider the uplink communication stage.
    	In this work, we consider the orthogonal frequency division multiple access (OFDMA) with $M$ RBs indexed by $\mathcal{M}=\{1,2,...,M\}$ for devices to upload intermediate parameters. Each device can occupy one uplink RB in a communication round to upload its activation. Let ${\boldsymbol{z}}_{k,t}=({\boldsymbol{z}}_{k,t}^{(1)},{\boldsymbol{z}}_{k,t}^{(2)},\cdots,{\boldsymbol{z}}_{k,t}^{(M)})$ denote the RB allocation vector for device $k$ in round $t$, where ${\boldsymbol{z}}_{k,t}^{(m)} \in \{0,1\}$, ${\boldsymbol{z}}_{k,t}^{(m)}=1$ indicates that the $m$-th resource block is allocated to device $k$, and ${\boldsymbol{z}}_{k,t}^{(m)}=0$ otherwise. Thus, the achievable transmit rate of device $k$ in round $t$ is 
    	\begin{align}
    		R_{n,k}^{U L}&= \sum_{m=1}^{M} \boldsymbol{z}_{n,k}^{(m)} \mathbf{B} \log_2 (1+\frac{\boldsymbol{p}_{n,k} h_o d_{n,k}^{-\gamma}}{N_o}),\nonumber\\
    		&= a_{n,k} \mathbf{B} \log_2 (1+\frac{\boldsymbol{p}_{n,k} h_o d_{n,k}^{-\gamma}}{N_o}),
    		\label{RUL}
    	\end{align}
    \noindent where $\mathbf{B}$, $\boldsymbol{p}_{n,k}$, $h_o$, $N_o$ represent the subcarrier bandwidth, device $n$'s transmission power, channel gain, and thermal noise power, respectively. To be simplify, we denote that $a_{n,k} = \sum_{m=1}^{M} \boldsymbol{z}_{n,k}^{(m)}$. During the local computations, devices transmit the intermediate data to edge server for further processing. After the local computation, devices transmit the updated device-side model to the server. The uploading parameters contain activation $\mathcal{A}$ and updated device-side model $\boldsymbol{\omega}_{c}$. In this paper, due to the activation is much larger than device-side model, so we omit the device-side model uploading period. Before uplink transmission, the activation of device $n$ at round $k$ is quantized with $q_{n,k}$ bits. Let $s_0$ denote the original model size with $q^{max}$ bits. Since $\mu$ is much smaller than $q_{n,k}|\boldsymbol{d}|$ from (\ref{num_bits}), we can approximately calculate the size of the quantized data with $q_{n,k}$ bits as \cite{10542529,10171192}
    	\begin{equation}
    		s_{n,k} = \frac{s_o q_{n,k} }{q^{max}}.
    	\end{equation}
    	Then, the uplink communication time $T_n^{com}$ can be expressed as 
    	\begin{equation}
    		T_n^{com} = \frac{s_{n,k}}{R_n^{UL}}.
    	\end{equation}
    	Thus, the energy consumption is denoted as 
    	\begin{equation}
    		E_n^{com} = P_n^{UL} \cdot T_n^{com}.
    	\end{equation}
    	\subsubsection{Total Latency and Energy}
    	In SFLAM, the overall time and energy cost of device $n$ is
    	\begin{align}
    		T_{n,k} &= T_n^{cmp}+T_n^{com}+T_s^{cmp},\\
    		E_{n,k} &= E_n^{cmp}+E_n^{com}.
    	\end{align}
    	\par The overall system energy cost of all participant devices in $k$-round is defined as follows:
    	\begin{equation}
    		Cost_k = \sum_{n\in\mathcal{V}} a_{n,k} E_{n,k}.\label{energycost}
    	\end{equation}
 
        \section{Convergence Analysis of SFLAM}     \label{section4}
       In this section, we present the convergence analysis of SFLAM, beginning with the assumptions and proposition. Finally, we obtain the convergence result based on activation quantization.  
      \subsection{Assumptions} 
         We start with some assumptions for SFLAM convergence analysis  \cite{10038639,10171192,han2024convergence,li2019convergence,9806308}.
        	\begin{myAssu}\label{assump1}
                   Each device $n$'s loss function $L_n$ is $S$-smooth. Specifically, for all $\boldsymbol{\omega}, \boldsymbol{v} \in \mathbb{R}^w$, the following inequality holds: 
                   $L_n(\boldsymbol{\omega}) \leq L_n(\boldsymbol{v}) + \langle \nabla L_n(\boldsymbol{v}), \boldsymbol{\omega} - \boldsymbol{v} \rangle + \frac{S}{2} | \boldsymbol{\omega} - \boldsymbol{v} |_2^2$.
                   This smoothness assumption is commonly satisfied by various loss functions, including logistic regression, softmax classifiers, and $l_2$-norm regularized linear regression.
        	\end{myAssu}
        	\begin{myAssu}\label{assump2}
                    Each device's loss function, $L_n$, is assumed to be $\mu$-strongly convex for some $\mu \geq 0$. Specifically, we have: $L_n(\boldsymbol{\omega}) \geq L_n(\boldsymbol{v}) + \langle \nabla L_n(\boldsymbol{\omega}), \boldsymbol{\omega} - \boldsymbol{v} \rangle + \frac{\mu}{2} | \boldsymbol{\omega} - \boldsymbol{v} |_2^2.$ 
                \end{myAssu}
        	\begin{myAssu}\label{assump3} 
                    The stochastic gradients are unbiased with the variance bounds: $\mathbb{E}{\xi_n \sim \mathcal{D}_n} \left[ | \nabla L_n(\boldsymbol{\omega}, \xi_n) - \nabla L_n(\boldsymbol{\omega}) |_2^2 \right] \leq \sigma_n^2$, and $\mathbb{E}_{\xi_n \sim \mathcal{D}n} \left[\nabla L_n(\boldsymbol{\omega}, \xi_n)\right] = \nabla L_n(\boldsymbol{\omega})$.
                \end{myAssu}	
        	\begin{myAssu}\label{assump4} 
                    The expected squared norm of the stochastic gradients is bounded by $G^2$, i.e., $\mathbb{E}_{\xi_n \sim \mathcal{D}_n} \left[ | g_n(\boldsymbol{\omega}, \xi_n) |_2^2 \right] \leq G^2$.
        	\end{myAssu}
        	\begin{myAssu}\label{assump5}
                    There exists a constant $\epsilon^2$ such that the divergence between local and global gradients is bounded as follows: $| \nabla L_n(\boldsymbol{\omega}) - \nabla \mathbf{L}(\boldsymbol{\omega}) |_2^2 \leq \epsilon^2$. A larger value of $\epsilon^2$ indicates a higher degree of heterogeneity in the data.
        	\end{myAssu}
        
            \subsection{Proposition}
        	When data is IID across multiple devices, the convergence analysis in \cite{koloskova2020unified} can be directly applied to SFL. However, in the case of heterogeneous data, this theory cannot be applied directly due to the device drift problem. The situation is further complicated by partial device participation, which introduces bias into the training process. While existing FL theories have addressed both data heterogeneity \cite{li2019convergence} and partial participation \cite{wang2022unified}, the convergence analysis of SFL presents unique challenges. This is primarily due to the dual-paced model aggregation and updates occurring at both the device and server sides. Specifically, the performance bound analysis of SFL is more intricate than its conventional FL counterparts, owing to the dual-paced nature of the model aggregation and updates. To tackle this challenge, we decompose the convergence analysis into the server-side and device-side updates. The decomposition is presented below.
        	
        	\begin{myPro}(Convergence Decomposition)
        		Let $\boldsymbol{\omega}^* \triangleq [\boldsymbol{\omega}^{\mathcal{C}}; \boldsymbol{\omega}^{\mathcal{S}}]$ represent the optimal global model that minimizes $\mathbf{L}(\cdot)$, and let $\boldsymbol{\omega}_k \triangleq [\boldsymbol{\omega}_k^{\mathcal{C}}; \boldsymbol{\omega}_k^{\mathcal{S}}]$ denote the global model after $k$ rounds of SFL training. According to \cite{han2024convergence}, we have:
                    \begin{equation} 
                        \mathbb{E}[\mathbf{L}(\boldsymbol{\omega}_k)] - \mathbf{L}(\boldsymbol{\omega}^*) 
                        \leq \frac{S}{2} \left( \mathbb{E}[|\boldsymbol{\omega}_k^{\mathcal{S}} - \boldsymbol{\omega}^{\mathcal{S}}|^2] + \mathbb{E}[|\boldsymbol{\omega}_k^{\mathcal{C}} - \boldsymbol{\omega}^{\mathcal{C}*}|^2] \right). 
                    \end{equation} 
                \end{myPro}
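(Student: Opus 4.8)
The plan is to establish this decomposition through a single application of the $S$-smoothness of the aggregate objective at the optimum, followed by an exact split of the squared parameter distance along the device/server block structure. Crucially, neither strong convexity (Assumption~\ref{assump2}) nor the gradient-noise bounds (Assumptions~\ref{assump3}--\ref{assump4}) are needed here; the statement is a pure smoothness consequence.

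First, I would verify that the aggregate loss $\mathbf{L}(\boldsymbol{\omega}) = \sum_{n \in \mathcal{V}} \rho_n a_n \boldsymbol{L}_n(\boldsymbol{\omega})$ inherits $S$-smoothness from the per-device losses of Assumption~\ref{assump1}. Since $\sum_{n\in\mathcal{V}}\rho_n a_n = \sum_{n\in\mathcal{N}_k}\rho_n = 1$, the function $\mathbf{L}$ is a convex combination of $S$-smooth functions and is therefore itself $S$-smooth, so the quadratic upper bound of Assumption~\ref{assump1} applies to $\mathbf{L}$. Evaluating that bound with reference point $\boldsymbol{v} = \boldsymbol{\omega}^*$ and argument $\boldsymbol{\omega} = \boldsymbol{\omega}_k$ gives
\[
\mathbf{L}(\boldsymbol{\omega}_k) \le \mathbf{L}(\boldsymbol{\omega}^*) + \langle \nabla \mathbf{L}(\boldsymbol{\omega}^*),\, \boldsymbol{\omega}_k - \boldsymbol{\omega}^* \rangle + \frac{S}{2}\,\|\boldsymbol{\omega}_k - \boldsymbol{\omega}^*\|_2^2 .
\]

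Second, because $\boldsymbol{\omega}^*$ is the unconstrained global minimizer of $\mathbf{L}$, first-order stationarity yields $\nabla \mathbf{L}(\boldsymbol{\omega}^*) = \boldsymbol{0}$, so the inner-product term vanishes and $\mathbf{L}(\boldsymbol{\omega}_k) - \mathbf{L}(\boldsymbol{\omega}^*) \le \tfrac{S}{2}\|\boldsymbol{\omega}_k - \boldsymbol{\omega}^*\|_2^2$. Third, I would invoke the concatenation $\boldsymbol{\omega} = [\boldsymbol{\omega}^{\mathcal{C}}; \boldsymbol{\omega}^{\mathcal{S}}]$: the device-side and server-side parameters occupy disjoint coordinate blocks, so the squared Euclidean norm is additive,
\[
\|\boldsymbol{\omega}_k - \boldsymbol{\omega}^*\|_2^2 = \|\boldsymbol{\omega}_k^{\mathcal{C}} - \boldsymbol{\omega}^{\mathcal{C}}\|_2^2 + \|\boldsymbol{\omega}_k^{\mathcal{S}} - \boldsymbol{\omega}^{\mathcal{S}}\|_2^2 .
\]
Substituting and taking expectation over the training randomness (SGD sampling, stochastic quantization of Lemma~\ref{lemma1}, and device participation $a_{n,k}$) yields the claimed inequality.

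The main subtlety lies not in any heavy computation but in the reading of the two blocks of $\boldsymbol{\omega}^*$: the clean additive split requires that $\boldsymbol{\omega}^{\mathcal{C}}$ and $\boldsymbol{\omega}^{\mathcal{S}}$ be the two blocks of a \emph{single} joint minimizer of the coupled objective $\mathbf{L}$, so that $\nabla\mathbf{L}(\boldsymbol{\omega}^*)=\boldsymbol{0}$ holds simultaneously in both blocks, rather than being obtained as two separately optimized subproblems. I would make this point explicit, and then remark that the inequality reduces the remaining convergence analysis to separately controlling $\mathbb{E}\|\boldsymbol{\omega}_k^{\mathcal{C}} - \boldsymbol{\omega}^{\mathcal{C}}\|_2^2$ and $\mathbb{E}\|\boldsymbol{\omega}_k^{\mathcal{S}} - \boldsymbol{\omega}^{\mathcal{S}}\|_2^2$, i.e.\ the dual-paced device- and server-side drift under the quantization error $\delta$ and the heterogeneity bound $\epsilon^2$ of Assumption~\ref{assump5}; those two bounds are the true substance deferred to the subsequent lemmas.
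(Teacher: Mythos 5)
Your proof is correct and is essentially the argument behind the paper's claim: the paper itself gives no derivation for this proposition, deferring entirely to \cite{han2024convergence}, and the standard proof there is exactly your chain of reasoning --- $S$-smoothness of the weighted objective $\mathbf{L}$ (inherited as a convex combination of $S$-smooth $L_n$), the vanishing gradient $\nabla\mathbf{L}(\boldsymbol{\omega}^*)=\boldsymbol{0}$ at the unconstrained joint minimizer, block additivity of the squared Euclidean norm over the disjoint device/server coordinates, and finally taking expectation. Your closing remark that $\boldsymbol{\omega}^{\mathcal{C}}$ and $\boldsymbol{\omega}^{\mathcal{S}}$ must be the two blocks of a \emph{single} joint minimizer (so stationarity holds simultaneously in both blocks, rather than for two separately optimized subproblems) is the right reading of the proposition's slightly inconsistent notation ($\boldsymbol{\omega}^{\mathcal{S}}$ versus $\boldsymbol{\omega}^{\mathcal{C}*}$), and no gap remains.
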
This result demonstrates that despite the complexity introduced by dual-paced updates, bounding the SFL performance gap can be achieved by separately bounding the gaps at the server-side and device-side models. Importantly, this decomposition approach is also applicable to other distributed frameworks, such as SL.
            \subsection{Convergence Analysis Based on Activation Quantization} To simplify the derivation of the upper bound on the quantized activation, we introduce the following assumptions and corresponding lemmas. These assumptions provide the theoretical foundation for the convergence analysis of quantized activation and help in deriving the relevant upper bounds.
        	\begin{myAssu}
                    To simplify the subsequent derivations, we assume that the loss function $ L_n(\boldsymbol{\omega}, \mathcal{A}) $ of each device $ n $ is $ \mathcal{L} $-smooth, i.e., for all $ \mathcal{A} \in \mathbb{R}^{\boldsymbol{d}} $, the second-order gradient satisfies: $\|\nabla^2_{\mathcal{A}} L_n(\boldsymbol{\omega}, \mathcal{A})\| \leq \mathcal{L}$. Here, $ \nabla^2_{\mathcal{A}} L_n(\boldsymbol{\omega}, \mathcal{A}) $ represents the Hessian of $ L_n $ with respect to $ \mathcal{A} $.
        		\label{assumption6}
        	\end{myAssu}
        	
        	\begin{myLem}\label{Lemma-2} For quantized activation $ \mathcal{Q}(\mathcal{A}_n(\xi_n)) $, the variance of the stochastic gradients satisfies:
        		\begin{align}
        			&\mathbb{E}_{\xi_n \sim \mathcal{D}_n} \mathbb{E}_Q \left[ \| \nabla L_n(\boldsymbol{\omega}, \mathcal{Q}(\mathcal{A}_n(\xi_n))) - \nabla L_n(\boldsymbol{\omega}) \|_2^2 \right] \notag \\
        			\leq & \sigma_n^2 + \mathcal{L}^2 \cdot \delta \| \mathcal{A}_n(\xi_n) \|^2.\nonumber
        		\end{align}
        	\end{myLem}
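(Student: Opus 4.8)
The plan is to split the deviation of the quantized stochastic gradient from the true gradient into a pure quantization component and a pure sampling component, bound each independently, and use the unbiasedness of the quantizer to annihilate the cross term (so that the two bounds add with unit coefficients, matching the claimed right-hand side rather than a factor-of-two relaxation). First I would insert and subtract the unquantized stochastic gradient $\nabla L_n(\boldsymbol{\omega}, \mathcal{A}_n(\xi_n))$, i.e.\ the gradient evaluated at the exact forwarded activation, which coincides with the ordinary minibatch gradient $\nabla L_n(\boldsymbol{\omega}, \xi_n)$ of Assumption \ref{assump3}, writing
\[
\nabla L_n(\boldsymbol{\omega}, \mathcal{Q}(\mathcal{A}_n(\xi_n))) - \nabla L_n(\boldsymbol{\omega}) = e_Q + e_S,
\]
where $e_Q = \nabla L_n(\boldsymbol{\omega}, \mathcal{Q}(\mathcal{A}_n(\xi_n))) - \nabla L_n(\boldsymbol{\omega}, \mathcal{A}_n(\xi_n))$ carries the quantization perturbation and $e_S = \nabla L_n(\boldsymbol{\omega}, \mathcal{A}_n(\xi_n)) - \nabla L_n(\boldsymbol{\omega})$ carries the sampling noise.

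Next I would expand $\|e_Q + e_S\|_2^2 = \|e_Q\|_2^2 + \|e_S\|_2^2 + 2\langle e_Q, e_S\rangle$ and take the quantization expectation $\mathbb{E}_Q$ conditional on the draw $\xi_n$. Since $e_S$ is measurable with respect to $\xi_n$ and independent of the quantization randomness, the cross term collapses to $2\langle \mathbb{E}_Q[e_Q], e_S\rangle$; invoking the unbiasedness of $\mathcal{Q}$ from Lemma \ref{lemma1} ($\mathbb{E}[\mathcal{Q}(\mathcal{A})] = \mathcal{A}$), which I claim propagates to $\mathbb{E}_Q[e_Q] = 0$, the cross term vanishes and only the two squared-norm terms survive.

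For the quantization term I would use Assumption \ref{assumption6}: the bounded Hessian $\|\nabla_{\mathcal{A}}^2 L_n\| \leq \mathcal{L}$ makes the gradient $\mathcal{L}$-Lipschitz in its activation argument, so $\|e_Q\|_2^2 \leq \mathcal{L}^2 \|\mathcal{Q}(\mathcal{A}_n(\xi_n)) - \mathcal{A}_n(\xi_n)\|_2^2$; applying the variance bound of Lemma \ref{lemma1} under $\mathbb{E}_Q$ then gives $\mathbb{E}_Q[\|e_Q\|_2^2] \leq \mathcal{L}^2 \delta \|\mathcal{A}_n(\xi_n)\|_2^2$. For the sampling term I would directly apply the bounded-variance part of Assumption \ref{assump3}, yielding $\mathbb{E}_{\xi_n}[\|e_S\|_2^2] \leq \sigma_n^2$. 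Taking the outer expectation over $\xi_n$ and summing the two bounds reproduces the stated inequality (up to the implicit outer expectation on $\|\mathcal{A}_n(\xi_n)\|_2^2$).

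The step I expect to be the main obstacle is justifying that the cross term truly vanishes, i.e.\ that $\mathbb{E}_Q[e_Q]=0$. This is exact only when $\nabla_{\boldsymbol{\omega}} L_n(\boldsymbol{\omega}, \cdot)$ is affine in the activation $\mathcal{A}$; for a genuinely nonlinear gradient map the unbiasedness of $\mathcal{Q}$ leaves a second-order residual bias. I would resolve this either by arguing that, in the SFL split, the gradient backpropagated to the device-side weights is linear in the forwarded activation, so unbiasedness transfers exactly, or—if one does not want to assume linearity—by Taylor-expanding $\nabla L_n(\boldsymbol{\omega}, \cdot)$ about $\mathcal{A}_n(\xi_n)$ and absorbing the Hessian-order remainder (controlled by $\mathcal{L}$ via Assumption \ref{assumption6}) into the existing $\mathcal{L}^2 \delta \|\mathcal{A}_n(\xi_n)\|_2^2$ term, so that the bound holds with the same constants.
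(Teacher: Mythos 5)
Your proposal follows essentially the same route as the paper's proof: the same decomposition obtained by inserting and subtracting the unquantized stochastic gradient $\nabla L_n(\boldsymbol{\omega}, \mathcal{A}_n(\xi_n))=\nabla L_n(\boldsymbol{\omega},\xi_n)$, the same cancellation of the cross term, the same Hessian-bound-plus-Lemma~\ref{lemma1} estimate $\mathcal{L}^2\delta\|\mathcal{A}_n(\xi_n)\|^2$ for the quantization term, and Assumption~\ref{assump3} for the sampling term. If anything, your treatment of the cross term (conditioning on $\xi_n$, invoking quantizer unbiasedness, and absorbing the second-order Taylor residual into the $\mathcal{L}^2\delta\|\mathcal{A}_n(\xi_n)\|^2$ term) is more careful than the paper's, which dismisses that term by appealing only to the zero mean of the sampling noise even though both factors of the inner product depend on $\xi_n$.
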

        	\begin{proof}
        		The detailed proof is provided in the Appendix \ref{proof-lemma2}.
        	\end{proof}
        	
        	\begin{myLem}\label{Lemma-3}For quantized activation $ \mathcal{Q}(\mathcal{A}_n(\xi_n)) $, the norm of the stochastic gradients satisfies:
        		\begin{align}
        			\mathbb{E}_{\xi_n \sim \mathcal{D}_n} \mathbb{E}_Q \left[ \| g_n(\boldsymbol{\omega}, \mathcal{Q}(\mathcal{A}_n(\xi_n))) \|_2^2 \right] \leq G^2 + \mathcal{L}^2 \cdot \delta \| \mathcal{A}_n(\xi_n) \|^2.\nonumber
        		\end{align}
        	\end{myLem}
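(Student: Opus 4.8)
The plan is to mirror the structure already used for Lemma~\ref{Lemma-2}, treating the quantized stochastic gradient as the true-activation stochastic gradient plus a perturbation induced by the quantization error. Concretely, I would write
\begin{equation*}
g_n(\boldsymbol{\omega}, \mathcal{Q}(\mathcal{A}_n(\xi_n))) = g_n(\boldsymbol{\omega}, \mathcal{A}_n(\xi_n)) + \big( g_n(\boldsymbol{\omega}, \mathcal{Q}(\mathcal{A}_n(\xi_n))) - g_n(\boldsymbol{\omega}, \mathcal{A}_n(\xi_n)) \big),
\end{equation*}
and abbreviate the second bracket as $\Delta_n$. Expanding the squared norm produces the true-gradient term $\|g_n(\boldsymbol{\omega}, \mathcal{A}_n(\xi_n))\|_2^2$, the perturbation term $\|\Delta_n\|_2^2$, and a cross term $2\langle g_n(\boldsymbol{\omega}, \mathcal{A}_n(\xi_n)), \Delta_n\rangle$.

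First I would take the quantizer expectation $\mathbb{E}_Q$ over the stochastic rounding and dispose of the cross term using the unbiasedness property in Lemma~\ref{lemma1}: since $\mathbb{E}_Q[\mathcal{Q}(\mathcal{A}_n(\xi_n))] = \mathcal{A}_n(\xi_n)$, the rounding introduces no systematic bias, so $\mathbb{E}_Q[\Delta_n]=0$ and the cross term vanishes in expectation. This is precisely what yields the clean additive bound (coefficient one on each term) rather than a factor-of-two split, so the argument must rest on unbiasedness rather than the elementary inequality $\|a+b\|_2^2 \le 2\|a\|_2^2 + 2\|b\|_2^2$.

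Next I would bound the two surviving terms separately. The true-gradient term is controlled directly by Assumption~\ref{assump4}, giving $\mathbb{E}_{\xi_n}\big[\|g_n(\boldsymbol{\omega}, \mathcal{A}_n(\xi_n))\|_2^2\big] \le G^2$. For the perturbation term, I would invoke the $\mathcal{L}$-smoothness of $L_n$ in the activation from Assumption~\ref{assumption6}: the Hessian bound $\|\nabla^2_{\mathcal{A}} L_n\| \le \mathcal{L}$ makes the gradient $\mathcal{L}$-Lipschitz in $\mathcal{A}$, so $\|\Delta_n\| \le \mathcal{L}\,\|\mathcal{Q}(\mathcal{A}_n(\xi_n)) - \mathcal{A}_n(\xi_n)\|$. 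Squaring, taking $\mathbb{E}_Q$, and applying the variance bound $\mathbb{E}_Q[\|\mathcal{Q}(\mathcal{A}) - \mathcal{A}\|_2^2] \le \delta\|\mathcal{A}\|_2^2$ from Lemma~\ref{lemma1} gives $\mathbb{E}_Q[\|\Delta_n\|_2^2] \le \mathcal{L}^2 \delta \|\mathcal{A}_n(\xi_n)\|^2$. Adding the two contributions and taking the outer expectation over $\xi_n$ delivers the stated inequality.

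The main obstacle is the cross-term step. Strictly speaking, $\mathbb{E}_Q[\Delta_n]=0$ holds exactly only when the map $\mathcal{A} \mapsto g_n(\boldsymbol{\omega}, \mathcal{A})$ is affine, whereas Lemma~\ref{lemma1} guarantees unbiasedness of the quantizer itself, not of the gradient evaluated at the quantized point. I expect the honest route is either a first-order (mean-value) expansion of $g_n$ about $\mathcal{A}_n(\xi_n)$ with the second-order remainder absorbed into the smoothness constant $\mathcal{L}$, or an explicit approximate-unbiasedness statement consistent with Assumption~\ref{assumption6}; once this is settled, the remaining manipulations are the same routine steps that establish Lemma~\ref{Lemma-2}.
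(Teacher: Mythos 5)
Your proposal is correct and follows essentially the same route as the paper: the paper's proof first Taylor-expands the gradient about the true activation, writing $\nabla L_n(\boldsymbol{\omega},\mathcal{Q}(\mathcal{A}_n)) = \nabla L_n(\boldsymbol{\omega},\mathcal{A}_n) + \nabla^2 L_n(\boldsymbol{\omega},\mathcal{A}_n)\left(\mathcal{Q}(\mathcal{A}_n)-\mathcal{A}_n\right)$ so that the perturbation is linear in the quantization error, kills the cross term via $\mathbb{E}_Q[\mathcal{Q}(\mathcal{A}_n)]=\mathcal{A}_n$, and bounds the two surviving terms by $G^2$ (Assumption 4) and $\mathcal{L}^2\delta\|\mathcal{A}_n\|^2$ (Hessian bound plus Lemma 1), exactly as you describe. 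The obstacle you flag about the cross term is genuine, and the paper resolves it precisely in the way you anticipate: by the first-order expansion with the higher-order remainder silently dropped.
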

        	\begin{proof}
        		The detailed proof is provided in the Appendix \ref{proof-lemma3}.
        	\end{proof}
        	\begin{myTheo}
        		Let Assumptions 1 to 6 hold \cite{han2024convergence}, then the following convergence bound holds:
        		\begin{align}
        			&\mathbb{E}[\mathbf{L}(\boldsymbol{\omega}_k)] - \mathbf{L}(\boldsymbol{\omega}^*) \nonumber \\
        			&\leq \frac{8SN}{\mu^2(\gamma + k)} \left( \sum_{n=1}^N \rho_n^2 \left[ 2\sigma_n^2 + G^2 + \frac{G^2}{a_n} \right] \right) \nonumber \\
        			&\quad + \frac{768S^2}{\mu^3(\gamma + k)(\gamma + 1)} \left( \sum_{n=1}^N \rho_n \left[ 2\sigma_n^2 + G^2 \right] \right) \nonumber \\
        			&\quad + \frac{S(\gamma + 1)}{2(\gamma + K)} \left( \mathbb{E}[\|\boldsymbol{\omega}_0 - \boldsymbol{\omega}^{*}\|^2] \right) \nonumber\\
                        &\quad+ \mathcal{L}^2 \cdot \delta A^2 B. \nonumber
        		\end{align}
        		where $\delta = \frac{1+\sqrt{2d-1}}{2}\frac{1}{2^q-1}$, $A = \|\mathcal{A}_n(\xi_n)\|^2$, and $ B = \frac{8SN}{\mu^2(\gamma + k)} \sum_{n=1}^N \rho_n^2 \left( 3 + \frac{1}{a_n} \right) + \frac{768S^2}{\mu^3(\gamma + k)(\gamma + 1)} \sum_{n=1}^N 3a_n$.
        		\label{theorem1}
        	\end{myTheo}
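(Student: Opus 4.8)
The plan is to build on the Convergence Decomposition (the preceding Proposition), which reduces bounding the objective gap $\mathbb{E}[\mathbf{L}(\boldsymbol{\omega}_k)] - \mathbf{L}(\boldsymbol{\omega}^*)$ to separately controlling the mean-squared distances $\mathbb{E}\|\boldsymbol{\omega}_k^{\mathcal{S}} - \boldsymbol{\omega}^{\mathcal{S}}\|^2$ and $\mathbb{E}\|\boldsymbol{\omega}_k^{\mathcal{C}} - \boldsymbol{\omega}^{\mathcal{C}*}\|^2$. For each side I would run the standard strongly-convex FedAvg-type recursion under the diminishing step size $\eta_k = \frac{2}{\mu(\gamma+k)}$, following the machinery of Li et al. and Han et al.\ that already accommodates data heterogeneity (Assumption \ref{assump5}) and partial participation through the indicators $a_n$. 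Since the server-side and device-side updates share the same SGD structure, the first three terms of the claimed bound---the $\frac{1}{\gamma+k}$ sampling-variance term, the $\frac{1}{(\gamma+k)(\gamma+1)}$ local-drift term, and the $\frac{\gamma+1}{\gamma+K}$ initialization term---emerge exactly as in the unquantized analysis once the two sides are recombined via the decomposition.

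The one genuinely new ingredient is the quantization of the transmitted activations, and the key observation is that its entire effect is captured by Lemmas \ref{Lemma-2} and \ref{Lemma-3}. These state that quantization inflates the per-device gradient-variance bound from $\sigma_n^2$ to $\sigma_n^2 + \mathcal{L}^2\delta\|\mathcal{A}_n(\xi_n)\|^2$ and the second-moment bound from $G^2$ to $G^2 + \mathcal{L}^2\delta\|\mathcal{A}_n(\xi_n)\|^2$. My plan is therefore to carry through the one-step descent recursion
\begin{align}
\mathbb{E}\|\boldsymbol{\omega}_{k+1} - \boldsymbol{\omega}^*\|^2 \leq (1 - \mu\eta_k)\,\mathbb{E}\|\boldsymbol{\omega}_k - \boldsymbol{\omega}^*\|^2 + \eta_k^2\,\Phi_k, \nonumber
\end{align}
where $\Phi_k$ aggregates the variance and second-moment contributions, and to substitute the inflated bounds wherever $\sigma_n^2$ and $G^2$ enter $\Phi_k$. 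Because the substitution is purely additive, every quantization contribution factors out a common $\mathcal{L}^2\delta\|\mathcal{A}_n(\xi_n)\|^2$; unrolling the recursion with the same diminishing-step telescoping then reproduces the three base terms verbatim while collecting all quantization pieces into a single additive remainder $\mathcal{L}^2\delta A^2 B$ (reading $A^2 = \|\mathcal{A}_n(\xi_n)\|^2$, consistent with the lemmas). The coefficient $B$ is precisely the sum of the base coefficients stripped of their $\sigma_n^2,G^2$ factors: the $\sum_n \rho_n^2(3 + 1/a_n)$ piece mirrors the factor $2\sigma_n^2 + G^2 + G^2/a_n$ of the first term (each moment replaced by $1$, giving $2+1+1/a_n$), and the $\sum_n 3a_n$ piece mirrors the factor $2\sigma_n^2 + G^2$ of the local-drift term (giving $2+1=3$).

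I expect the main obstacle to lie not in the quantization step---which is a clean additive perturbation---but in the bookkeeping of the dual-paced aggregation and partial participation when deriving $\Phi_k$. Specifically, one must verify that the participation indicators $a_n$ and the data weights $\rho_n$ attach to the correct terms (the $G^2/a_n$ contribution in particular arises from the variance of sampling $N$ of $V$ devices), and that bounding the two sides separately and recombining them through the decomposition does not double-count the smoothness constant $S$. A secondary care point is that only the device-to-server activations are quantized, so the $\mathcal{L}^2\delta$ perturbation should enter through the device-side recursion; confirming that it propagates to the global bound with the stated $B$-coefficients, rather than being attenuated by an extra $\rho_n$ that would alter $B$, is the delicate part of the argument.
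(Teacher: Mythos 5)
Your proposal follows essentially the same route as the paper's own proof: invoke the Convergence Decomposition proposition, run the strongly-convex diminishing-step recursion of \cite{han2024convergence}, and then substitute the quantization-inflated moment bounds $\sigma_n^2 + \mathcal{L}^2\delta\|\mathcal{A}_n\|^2$ and $G^2 + \mathcal{L}^2\delta\|\mathcal{A}_n\|^2$ from Lemmas \ref{Lemma-2} and \ref{Lemma-3} into that bound, regrouping the additive quantization contributions into the remainder $\mathcal{L}^2\delta A^2 B$ --- which is precisely the chain of inequalities in (\ref{eq-theorm}). Your closing caveats (the weighting of the drift term and the reading of $A^2$) in fact pinpoint the small notational inconsistencies the paper itself leaves unresolved, so no gap is introduced beyond what the paper's own terse proof already defers to \cite{han2024convergence}.
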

        	\begin{proof}
        		The detailed proof is provided in the Appendix \ref{proof-theorem1}.
        	\end{proof}
                \begin{figure*}[ht]
        		\centering
                    \subfloat[CIFAR-10 Dir(0.3) lr=0.001]{
        			\includegraphics[width=0.23\textwidth]{./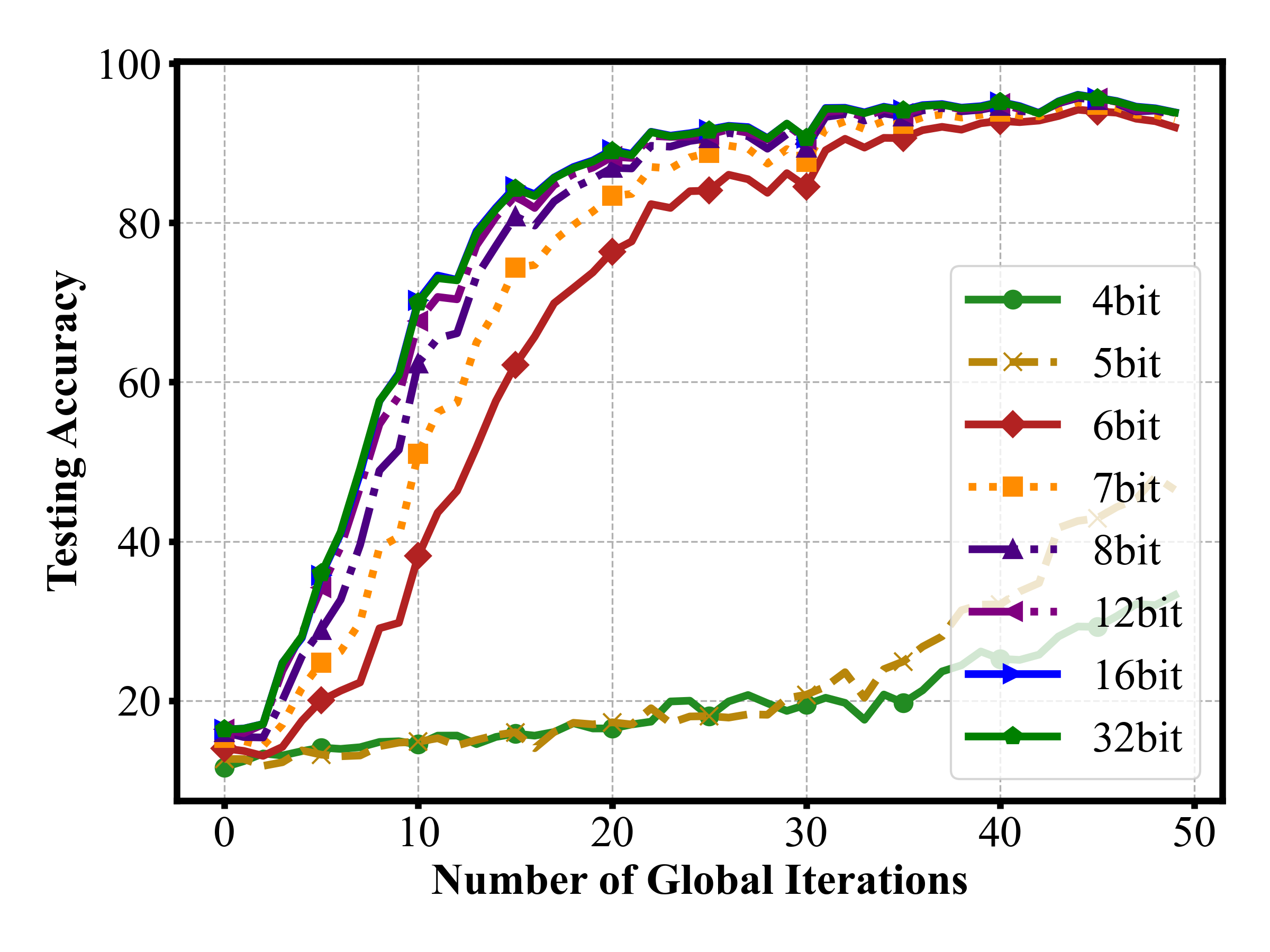}\label{fig:subfig1}
        		}
        		\subfloat[CIFAR-10 Dir(0.5) lr=0.001]{
        			\includegraphics[width=0.23\textwidth]{./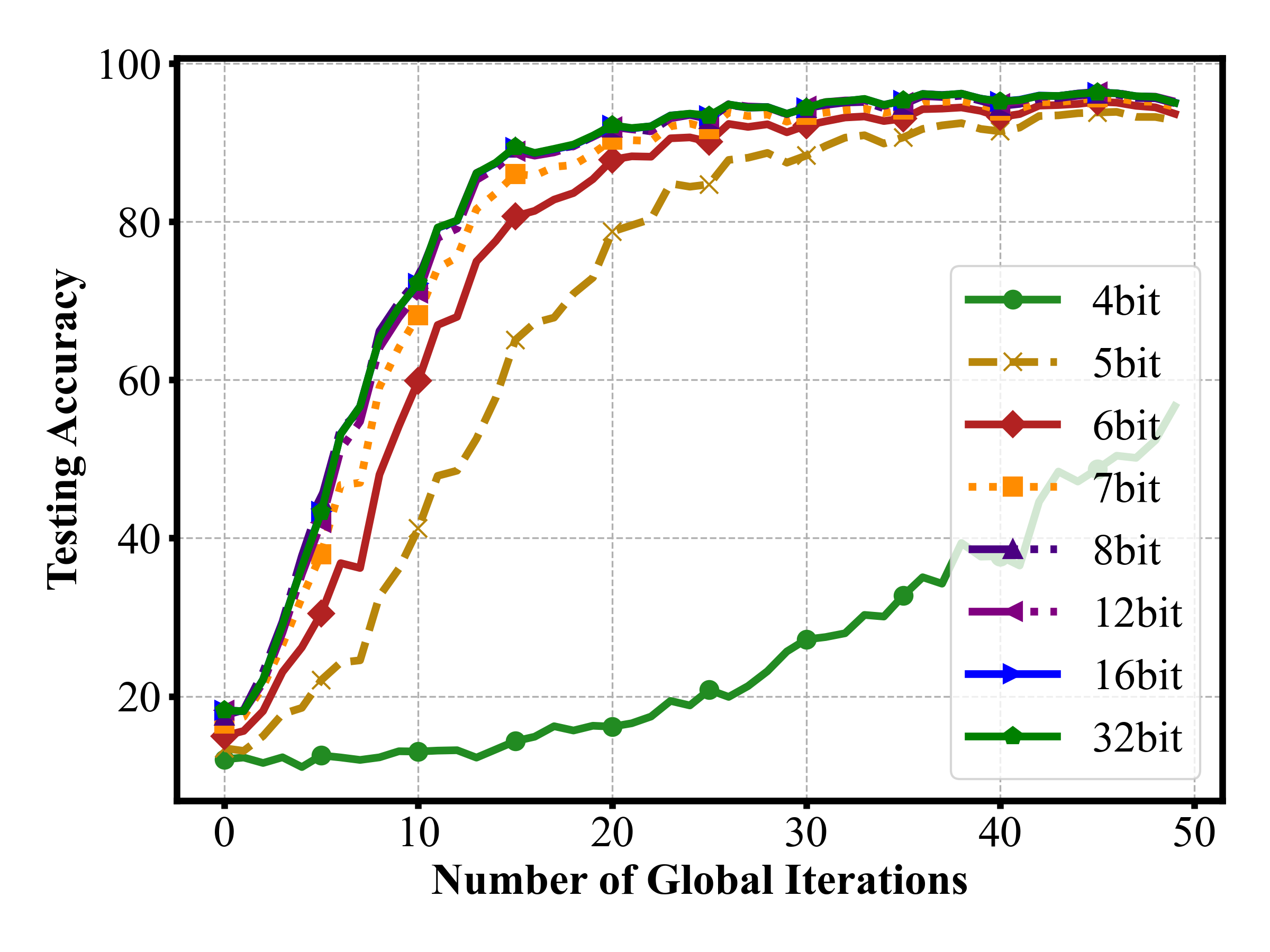}\label{fig:subfig2}
        		}
                    \subfloat[CIFAR-10 Dir(0.3) lr=0.01]{
        			\includegraphics[width=0.23\textwidth]{./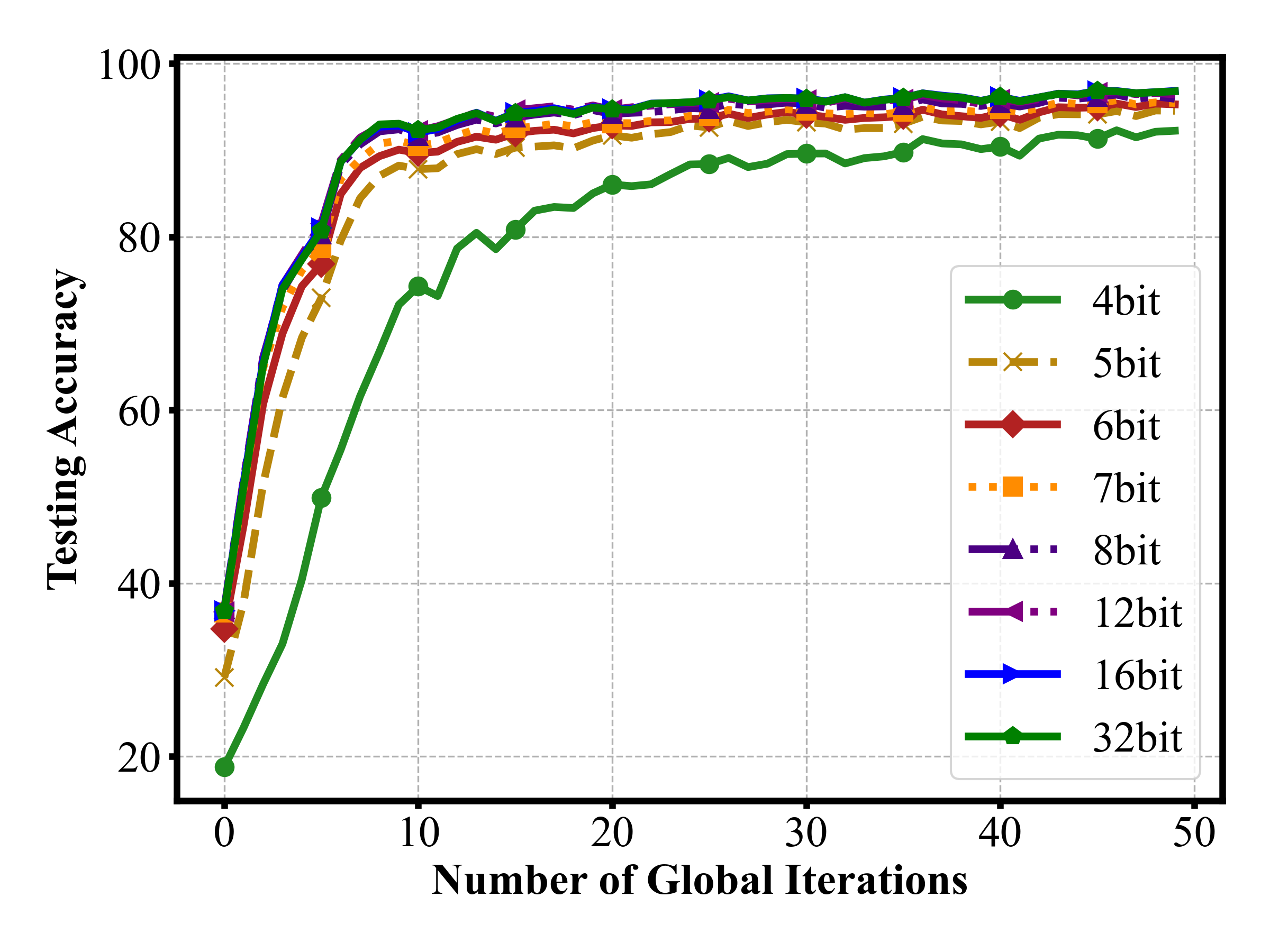}\label{fig:subfig3}
        		}
        		\subfloat[CIFAR-10 Dir(0.5) lr=0.01]{
        			\includegraphics[width=0.23\textwidth]{./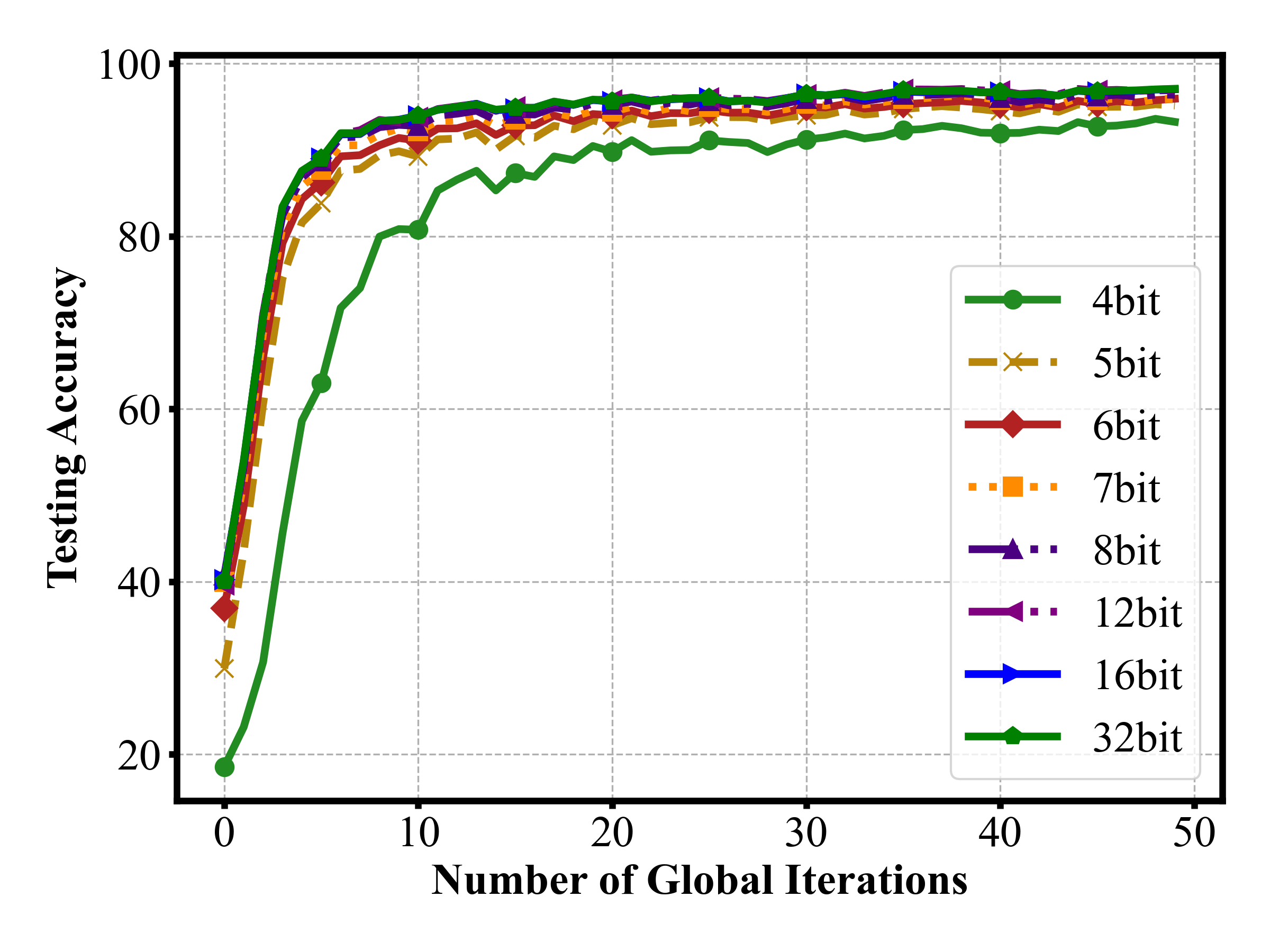}\label{fig:subfig4}
        		}
                    \hfill
        		\subfloat[CIFAR-100 Dir(0.3) lr=0.001]{
        			\includegraphics[width=0.23\textwidth]{./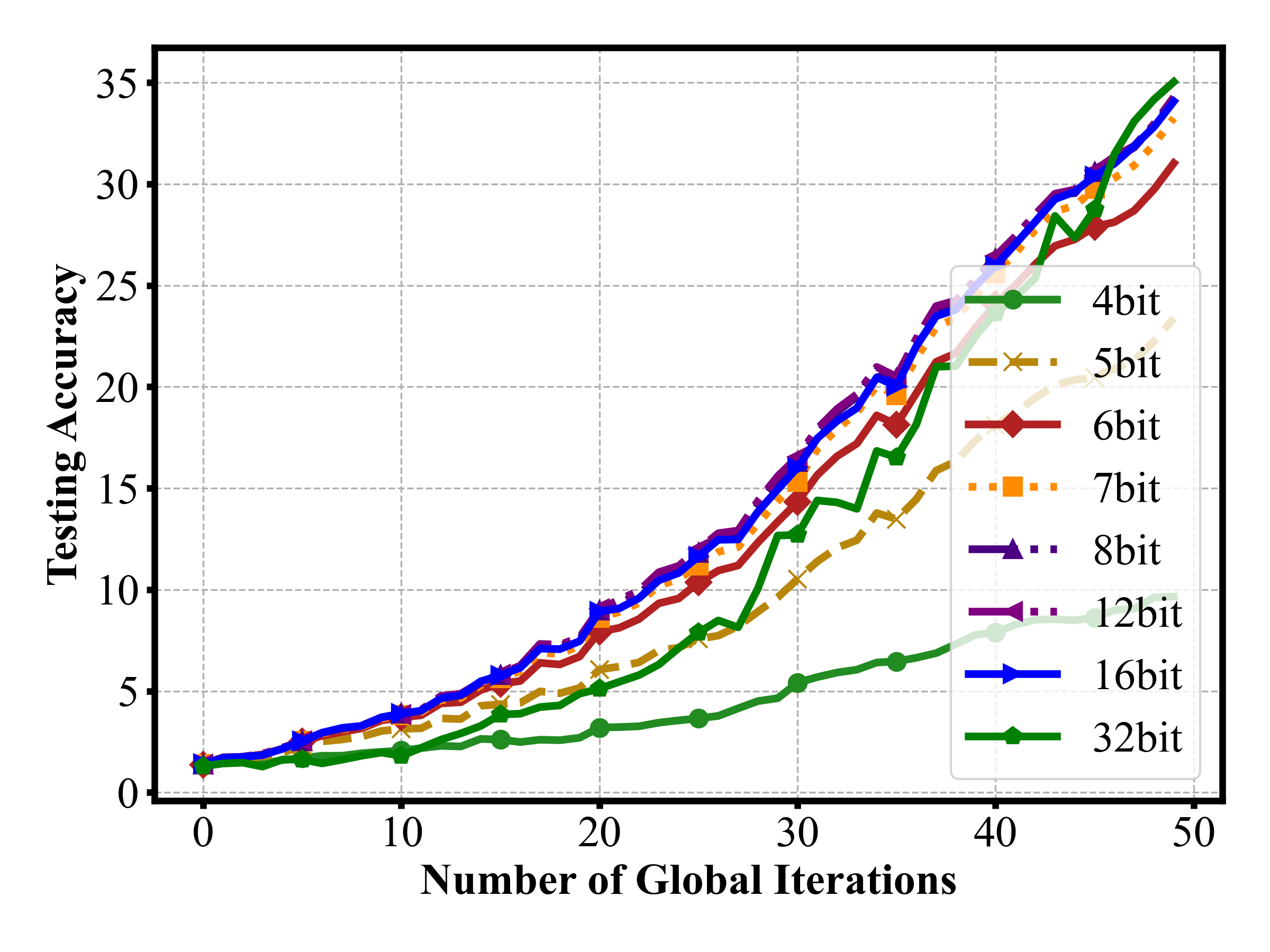}\label{fig:subfig5}
        		}
        		\subfloat[CIFAR-100 Dir(0.5) lr=0.001]{
        			\includegraphics[width=0.23\textwidth]{./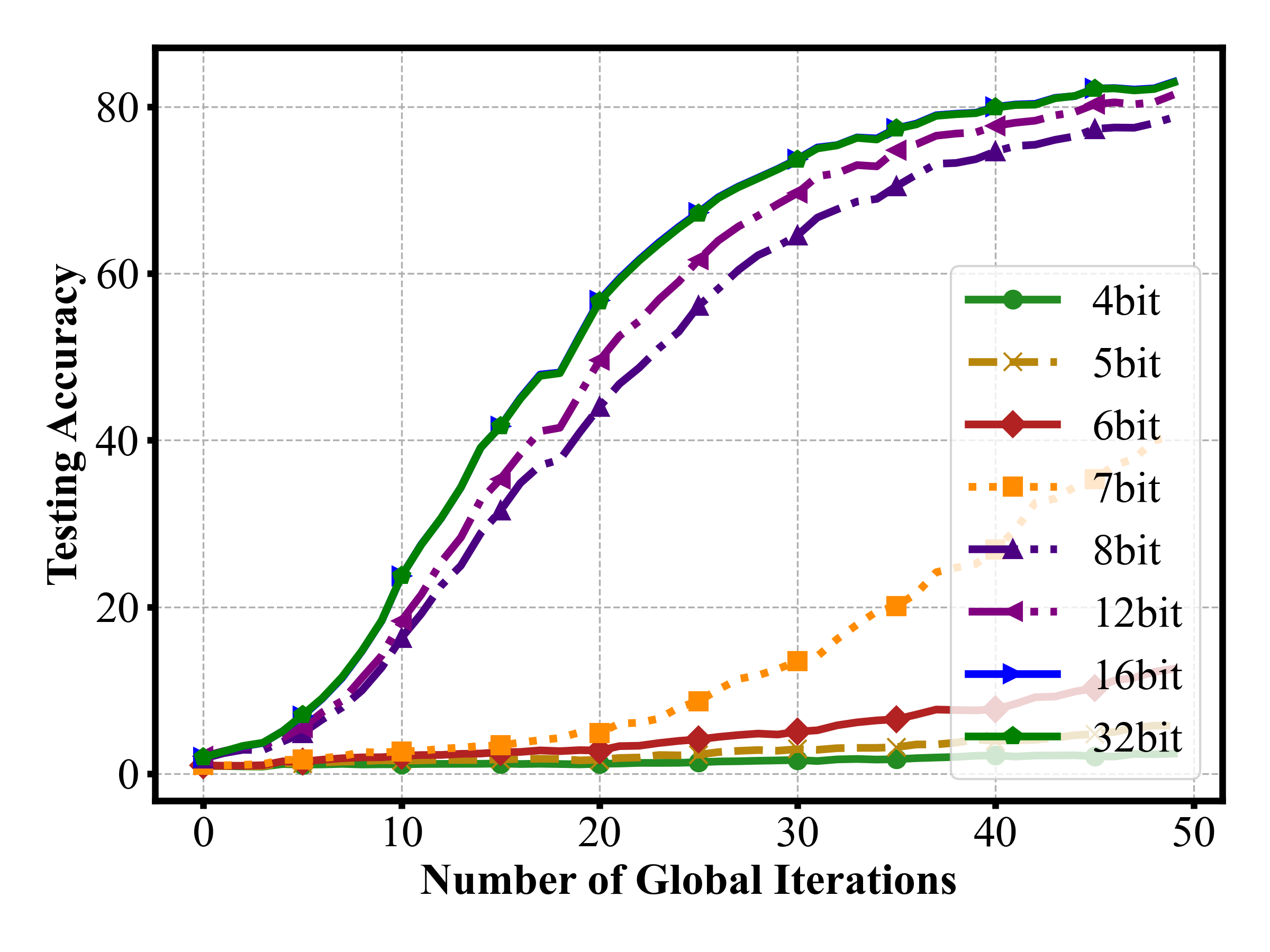}\label{fig:subfig6}
        		}
                    \subfloat[CIFAR-100 Dir(0.3) lr=0.01]{
        			\includegraphics[width=0.23\textwidth]{./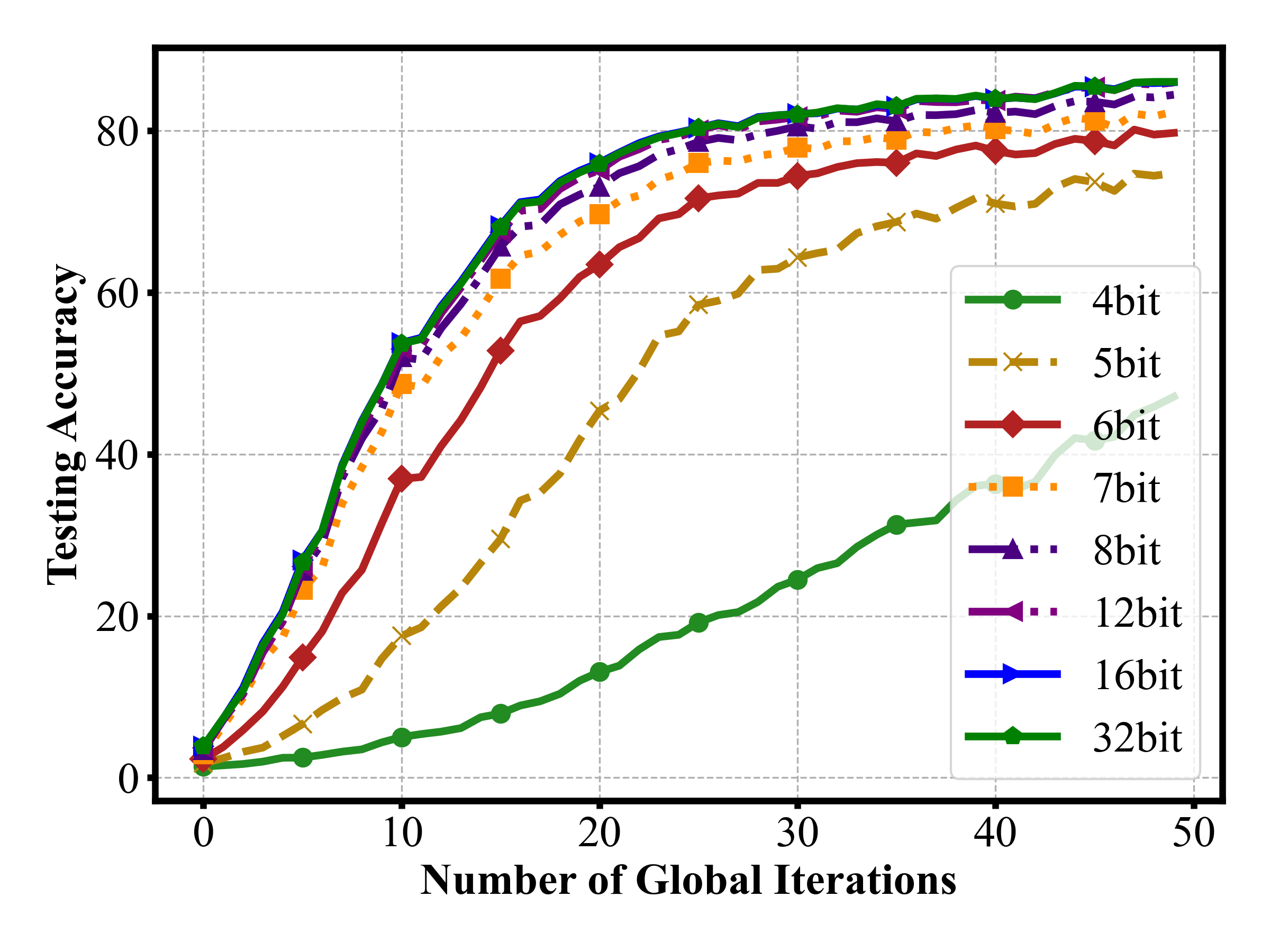}\label{fig:subfig7}
        		}
        		\subfloat[CIFAR-100 Dir(0.5) lr=0.01]{
        			\includegraphics[width=0.23\textwidth]{./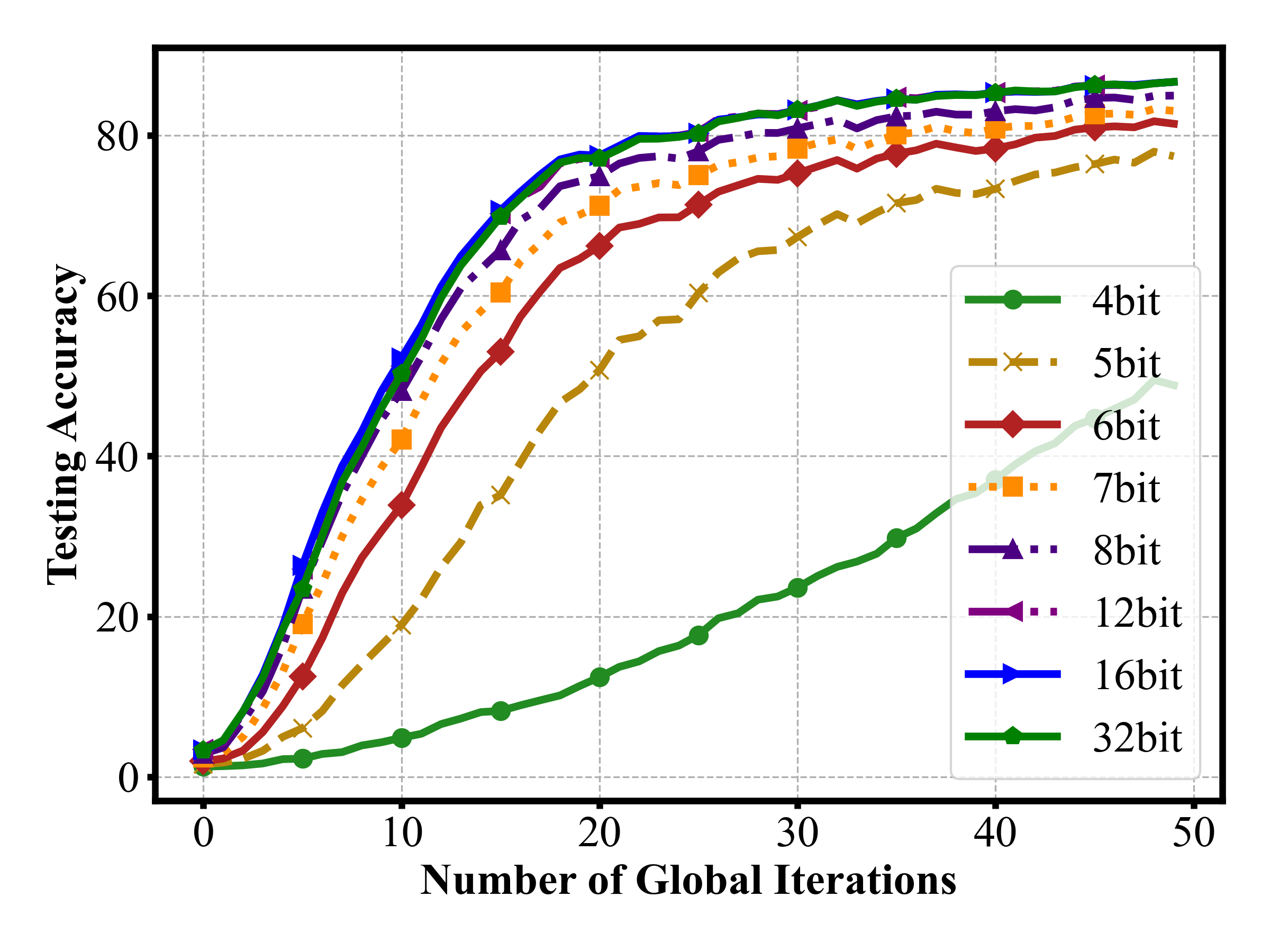}\label{fig:subfig8}
        		}
        		\caption{Testing accuracy with random selection of 10 out of 50 devices under different Dirichlet distributions. $Dir(\alpha)$ and quantization settings}
        		\label{fig:main}
        	\end{figure*}   
        	We observe that as the quantization bit $ q $ increases, the upper bound of $ \mathbb{E}[\mathbf{L}(\boldsymbol{\omega}_k)] - \mathbf{L}(\boldsymbol{\omega}^*) $ decreases, resulting in improved convergence of SFL. This observation aligns with the understanding that a smaller quantization bit may lead to more missing information, thus potentially reducing the model's accuracy \cite{10542529,9611373}. This theoretical result is consistent with our empirical observation in Fig. \ref{fig:main} of the quantification of activation. As we can see, when the quantization bit is less than 4, the accuracy and convergence speed is too slow to train a model, so we set the quantization bit $q^{min}=5$.

        \section{Problem Formulation}
        \label{section5}
            Based on the preceding observations, we deduce that augmenting quantization bits yields enhanced model accuracy at the expense of increased communication overhead. Although model accuracy improves with the number of quantization bits, the marginal gains decrease. Consequently, an inherent trade-off exists between model performance and system cost. To capture this relationship, we define model performance $P_k$ in the $k-th$ round as follows:
        	\begin{equation}
        		P_{n,k} = \log(1+q_{n,k})
        	\end{equation}
	    \noindent where $\rho_n = \frac{D_n}{\sum_{n \in \mathcal{N}_k}D_n}$ represents the proportion of local model $\boldsymbol{\omega}_n$. To balance the model performance and system cost, we define system efficiency as follows:
	\begin{equation}
		\mathcal{EFF}_k = \sum_{\forall n \in \mathcal{V}}a_{n,k} \frac{P_{n,k}}{Cost_{n,k}},
        \label{eq-objective-function}
	\end{equation}
	 \noindent where $a_{n,k}=\sum_{m=1}^{M} \boldsymbol{z}_{n,k}$ indicates whether device $n$ participates in the $k$-th round. There is a tradeoff between model performance and system cost. As quantization increases, the model performance $P_{n,k}(q_{n,k})$ improves; however, this comes at the expense of a higher system cost $Cost_{n,k}(q_{n,k})$. To minimize system cost while maintaining satisfactory model performance, we define the objective function as follows:
	\begin{subequations}
		\label{P}
		\begin{align}
			\mathcal{P}:\quad &\underset{\mathbf{z}, \mathbf{p}, \mathbf{q}}{\max} \quad \mathcal{EFF}_k\nonumber \\
			\text{s.t.} \quad &T_{n,k} \leq T_{\text{max}}, \quad \forall n \in \mathcal{V}, \forall k, \label{P1a}\\
			&\sum_{n=1}^{|\mathcal{V}|} z_{n,k}^{(m)}  \leq M, \quad \forall m \in \mathcal{M}, \forall k,\label{P1b}\\
			&\sum_{m=1}^{M} z_{n,k}^{(m)} \leq 1, \quad \forall n \in \mathcal{V}, \forall k, \label{P1c}\\
			&z_{n,k}^{(m)}  \in \{0,1\}, \quad \forall n \in \mathcal{V}, \forall k, \label{P1d}\\
			&p^{\text{min}} \leq p_{n,k} \leq p^{\text{max}}, \quad \forall n \in \mathcal{V}, \forall k, \label{P1e}\\
			&q^{\text{min}} \leq q_{n,k} \leq q^{\text{max}}, \quad \forall n \in \mathcal{V}, \forall k, \label{P1f}\\
			&a_{n,k} = \sum_{m=1}^{M} \boldsymbol{z}_{n,k}^{(m)} \in \{0,1\}, \quad \forall n \in \mathcal{V}, \forall k, \label{P1g}
		\end{align}
	\end{subequations}
	Due to the min-max nature of $\mathcal{EFF}_k$ and the integer constraints on $\boldsymbol{p}, \mathbf{\boldsymbol{z}}$ and $\mathbf{q}$, this problem is non-convex and NP-hard. To solve it, we propose the following approaches.
	
	\section{Optimal Power Control, Quantization Management and Bandwidth Allocation}
            \label{section6}
            To solve the problem $\mathcal{P}$, we first decompose it into three sub-problems: power control, quantization management and resource allocation.
    	\subsection{Optimal Power Control}
    	For any given RB allocation policy $\mathbf{\boldsymbol{z}}$, it is straightforward to see that the power control policies of devices do not affect each other and independently contribute to the objective function. Therefore, the power control policy for each device can be solely optimized by itself.
    	\begin{subequations}
    		\label{SUBP1}
    		\begin{align}
    			\mathcal{SUBP}1:\quad &\underset{\mathbf{p}}{\min} \quad Cost_k\nonumber\\
    			\text{s.t.} \quad & (\ref{P1a}), (\ref{P1e}).\nonumber
    		\end{align}
    	\end{subequations}
    	Based on the greedy approach, minimizing the total energy consumption across all devices is equivalent to minimizing the energy consumption of each device individually. Since the energy function is separable, optimizing the energy for each device independently results in the global minimum for the system. For every device $n$, we have:
    	\begin{subequations}
    		\label{SUBP1-1}
    		\begin{align}
    			\mathcal{SUBP}1':\quad  &\underset{\mathbf{p}}{\min} \quad  E_{n}^{cmp}+\frac{p_n s_{n}}{B \log_2(1+\frac{p_n h_o d_{n}^{-\gamma}}{N_o})}, \nonumber\\
    			\text{s.t.} \quad & T_{n}^{cmp}+ T_{s}^{cmp} + \frac{s_{n}}{B\log_2(1+\frac{p_n h_o d_{n,k}^{-\gamma}}{N_o})} \leq T_{\text{max}}, \nonumber\\
    			&p_{min} \leq p_{n}\leq p_{max}.  \nonumber
    		\end{align}
    	\end{subequations}
        \par It can be observed that the second term in the objective function and the constraint involving $\log_2(1 + \frac{p_n h_o d_n^{-\gamma}}{N_o})$ are non-convex in terms of the power variable $p_n$, which makes $\mathcal{SUBP}1'$ a non-convex optimization problem. We define $p_n^{i}$ as the power at the $i$-th iteration. The non-convex part of the objective function is:
    	\begin{equation}
    		E_{n}^{com} = \frac{p_n s_n}{B \log_2\left(1 + \frac{p_n h_o d_n^{-\gamma}}{N_o}\right)}.
    	\end{equation}
        \par To obtain the approximate upper bound, $E_{n}^{com}$ can be approximated by its first-order Taylor expansion $\hat{E}_{n}^{com}$ around $p_n^{i}$, which is given by:
    	\begin{equation}
    		\hat{E}_{n}^{com}(p_n^i,p_n) = E_{n}^{com}(p_n^{i}) +\frac{dE_n^{com}}{dp_n^i}\cdot (p_n - p_n^{i}),
    		\label{hat_e}
    	\end{equation}
    	where $\frac{dE_n^{com}}{dp_n^i}$ denotes the first-order derivative of $ E_{n}^{com}$ evaluated at $p_n^{i}$: $\frac{dE_n^{com}}{dp_n^i} =	\frac{A_0}{\log_2(1+B_0 p_n^i)}-\frac{A_0 B_0 p_n^i}{\ln2 (1+B_0 p_n^i) (\log_2(1+B_0 p_n^i))^2}$.\par
    	By substituting the non-convex part $E_n^{com}$ of the objective function with its first-order approximation $\frac{dE_n^{com}}{dp_n}$, we convert the objective function into a convex form at each iteration. Thus, the original non-convex problem becomes a convex problem at each SCA iteration.\par
    	Similarly, for the constraints involving the logarithmic terms, we define the non-convex part of the constraint as $T_n^{com} = s_n/\left(B \log_2\left(1 + \frac{p_n h_o d_n^{-\gamma}}{N_o}\right)\right)$. To approximate the upper bound of $T_n^{com}$, we apply the first-order Taylor expansion $\hat{T}_n^{com}(p_n^{i}, p_n)$ around $p_n^{i}$, which is given by 
    	\begin{equation}
    		\hat{T}_n^{com}(p_n^{i}, p_n) = T_n^{com}(p_n^{i}) +\frac{dT_n^{com}}{dp_n^i} \cdot ( p_n - p_n^{i}),
    		\label{hat_t}
    	\end{equation}
        where first-order derivative of $T_n^{com}$ evaluated at $p_n^i$ is $\frac{dT_n^{com}}{dp_n^i} = \frac{-A_0 B_0 ln(2)}{(1+B_0 p_n^i)(ln(1+B_0p_n^i))^2}$, where $A_0 =s_n/B$ and $B_0 = \frac{h_od_n^{-\gamma}}{N_o}$. This ensures that the constraint becomes convex by replacing the non-convex part with the first-order approximation, transforming the constraint into $\hat{T}_n^{com}(p_n^{i}, p_n) \leq T_{\text{max}}$.
    	Thus, the optimization problem \(\mathcal{SUBP}1'\) becomes convex at each SCA iteration, and we can iteratively solve that.   	
    	\begin{algorithm}[t]
    		\caption{Optimal Power Control using SCA Method} 
    		\label{alg1}
    		\begin{algorithmic}
    			\REQUIRE Set the initial value of $\mathbf{z}, \mathbf{q}$, max time constraints $T_{max}$, the initial uplink power $p_{n,k}^0$ of vehicle $n$, iteration round $i=0$, the accuracy requirement $\epsilon$.
    			\REPEAT
    			\STATE calculate $\hat{E}_{n}^{com}(p_n^i,p_n)$ and $\hat{T}_{n}^{com}(p_n^i,p_n)$ according to Eqn. (\ref{hat_e}) and (\ref{hat_t}) respectively;
    			\STATE solve $\mathcal{SUBP}1$ by substituting $E_n^{com}(p_n)$ with  $\hat{E}_{n}^{com}(p_n^i,p_n)$, substituting $T_n^{com}(p_n)$ with $\hat{T}_{n}^{com}(p_n^i,p_n)$, and achieve the optimal solution $p_n^{i,*}$
    			\STATE $p_n \rightarrow p_n^{i,*}$, $i \rightarrow i+1$
    			\UNTIL{$\left\|p_n^i-p_n^{i-1}\right\| \leq \epsilon$}
    			\ENSURE	Optimal transmission power $\phi_n^*$.
    		\end{algorithmic}
    	\end{algorithm}    
    
    \subsection{Optimal Quantization Management}
    	We consider the optimization problem for optimal quantization, denoted as $\mathcal{SUBP}2$, formulated as follows:
    	\begin{subequations}
    		\label{SUBP2}
    		\begin{align}
    			\mathcal{SUBP}2:\quad  &\underset{\mathbf{q}}{\max} \quad \mathcal{EFF}_k = \sum_{n \in \mathcal{V}}a_{n,k}\frac{\log(1+q_{n,k})}{E_n^{cmp}+E_n^{com}}\nonumber \\
    			\text{s.t.} \quad & (\ref{P1a}), (\ref{P1f}). \nonumber
    		\end{align}
    	\end{subequations}
    	Using the greedy algorithm, the objective is to maximize the total system efficiency, which is equivalent to maximizing the efficiency of each device individually. Since the efficiency function is separable, optimizing each device’s efficiency independently ensures the global maximum for the system. This approach works because improving individual device efficiency directly enhances overall system performance. This translates into minimizing the energy for each device individually. For every device $n \in \mathcal{N}_k$, the problem becomes:
    
        \begin{align}
            \label{SUBP2-2}
            \mathcal{SUBP}2':\quad  &\underset{\mathbf{q}}{\max} \quad  \mathcal{EFF}_{n,k} = \frac{\log{(1+q_{n,k})}}{E_n^{cmp}+\frac{p_n s_o}{q_{max}R_{n}^{UL}}q_{n,k}}\nonumber \\
            \text{s.t.} \quad & T_n^{cmp}+\frac{s_o}{R_{n}^{UL}q_{max}}q_{n,k} \leq T_{max},\nonumber\\
            &q^{\text{min}} \leq q_{n,k} \leq q^{\text{max}}.\nonumber
        \end{align}
    
         \begin{algorithm}[t]
            \caption{Optimal Quantization Management for Device $n$}
            \label{alg2}
            \begin{algorithmic}
                \REQUIRE Set the maximum time constraint $T_{max}$, iteration round $i = 0$.
                \FOR{each $q_{n,k} \in [q^{\min}, q^{\max}]$}
                    \STATE Compute $\mathcal{EFF}_{n,k}$ and update optimal $q_{n,k}^{*}$.
                \ENDFOR
                \ENSURE Optimal quantization level $q_{n,k}^{*}$.
            \end{algorithmic}
        \end{algorithm}
    
    	To solve the subproblem $\mathcal{SUBP}2'$ in a computationally efficient manner, we employ an exhaustive search approach, which iteratively evaluates all possible quantization levels within the defined bounds of $q^{\min}$ and $q^{\max}$ for each device. This method allows us to find the optimal quantization level $q_{n,k}$ for each device individually. Specifically, for each device $n \in \mathcal{N}_k$, we evaluate the objective function across a discretized set of possible values for $q{n,k}$, ensuring that we respect the constraints on computation time and quantization limits. This procedure is detailed in Alg. \ref{alg2}. By optimizing the efficiency of each device independently, we maximize the overall system efficiency, as the total efficiency is the sum of the individual efficiencies.
        
    \subsection{Optimal Bandwidth Allocation}
	The objective is to find a matching of devices and sub-channels that minimizes the total energy consumption while respecting the constraints on the resource block allocations. Specifically, the allocation of resource blocks to each device $n$ is represented by the vector $\boldsymbol{z}_{n,k}$. The optimization problem can be formulated as:
	\begin{subequations}
		\label{SUBP3}
		\begin{align}
			\mathcal{SUBP}3:\quad  &\underset{\mathbf{z}_k}{\max} \quad \sum_{n=1}^N\sum_{m=1}^{M}z_{n,k}^{(m)}\frac{\log(1+q_n,k)}{E_{n,k}}\nonumber \\
			\text{s.t.} \quad & (\ref{P1a}), (\ref{P1b}), (\ref{P1c}), (\ref{P1d}).\nonumber
		\end{align}
	\end{subequations}
	Problem $\mathcal{SUBP}$3 is a typical non-linear integer programming problem which is difficult to solve. Below we transform it into a minimal weight perfect bipartite matching problem and find its optimal solution within polynomial time. The bipartite matching problem is to find a matching (i.e., a set of edges chosen such that no two edges share an endpoint.) with the maximum weight for the bipartite graph, where the weight is the summation of all the edges in the matching \cite{10444714,schrijver2003combinatorial}.\par 
    \begin{algorithm}[t]
        \caption{Joint Optimal Algorithm using BCD method} 
        \label{alg4}
        \begin{algorithmic}[1]
        \REQUIRE Set $i=0$, $\epsilon_1,\epsilon_2,\epsilon_3>0$. 
        \REPEAT
        \STATE Calculate the optimal uplink power \textbf{$\mathbf{p}^{i}$} from $\mathcal{SUBP}1$ at given $\mathbf{q}^{i-1},\mathbf{z}^{i-1}$ using Algorithm \ref{alg1};
        \STATE Calculate the optimal quantization bits $\mathbf{q}^{i}$ from $\mathcal{SUBP}2$ at given $\mathbf{p}^{i},\mathbf{z}^{i-1}$ by applying Algorithm \ref{alg2};
        \STATE Assign the optimal bandwidth allocation $\mathbf{z}^{i}$ at given $\mathbf{p}^{i},\mathbf{q}^{i}$
        \UNTIL{$\Vert \mathbf{p}^{i}-\mathbf{p}^{i-1}\Vert<\epsilon_1$,  $\Vert \mathbf{q}^{i} -\mathbf{q}^{i-1}\Vert<\epsilon_2$, $\Vert \mathbf{z}^{i}-\mathbf{z}^{i-1}\Vert<\epsilon_3$};
        \STATE $i =  i + 1$;
        \ENSURE $\mathbf{p}^{i},\mathbf{q}^{i},\mathbf{z}^{i}$. 
        \end{algorithmic}
    \end{algorithm}
	To transform $\mathcal{SUBP}$3 into a bipartite matching problem, we construct a complete and balanced bipartite graph $\mathcal{G} = \{\mathcal{V},\mathcal{E}\}$, where $\mathcal{V}=\mathcal{N}\cup\overline{\mathcal{M}}$ is the vertex set. In $\mathcal{G}$, each vertex $n$ in $\mathcal{N}$ corresponds a device $n$. $\overline{\mathcal{M}}=\mathcal{M}\cup\mathcal{M}_{v}$ is an extended set of $\mathcal{M}$, where each vertex $m$ in $\mathcal{M}$ correspondings to RB $m$. $\mathcal{M}_{v}$ is the virtual vertex set used to construct a balanced bipartite graph $\mathcal{G}$, which makes the size of $\overline{\mathcal{M}}$ equal to the size of $\mathcal{N}$, i.e., $|\overline{\mathcal{M}}|=|\mathcal{N}|$. The weight of edged in $\mathcal{G}$ is given by
	\begin{equation}
		\Delta_n =
		\begin{cases} 
			\frac{\log(1+q_{n,k})}{\frac{\kappa_1 f_n^2 \gamma_d}{C_n D_n} + \frac{p_n \cdot s_n}{B \log_2\left(1 + \frac{p_n h_o d_n^{-\gamma}}{N_o}\right)}}, & \text{if } (\ref{P1a}), \ \forall n, m, \\[10pt]
			0, & \text{otherwise.}
		\end{cases}
		\label{eqn:delta_km}
	\end{equation}
	Note that, in this work, we assume that the number of devices exceeds the number of RBs. According to the above-defined bipartite graph $\mathcal{G}$, we transform $\mathcal{SUBP}3$ into minimum weight perfect bipartite matching problem, which aims to find a perfect matching $\mathcal{H}$ of $\mathcal{G}$ minimizating $\sum_{e \in \mathcal{H}}\Delta_{n}$. Let $\theta_{n,m} \in \{0,1\}$ be the edge connecting vetex $n$ $(n\in \mathcal{N})$ and vertex $m$ $(m \in \overline{\mathcal{\mathcal{M}}})$, where $\theta_{n,m}=1$ denote that RB $m$ is allocated to device $n$, and $\theta_{n,m}=0$ otherwise. Thus, we formulate the bipartite matching problem as the following optimization problem.
	\begin{subequations}
		\label{SUBP3-2}
		\begin{align}
			\mathcal{\hat{SUBP}}3:\quad  &\underset{\theta}{\max} \quad \sum_{n=1}^N\sum_{m=1}^{M}\theta_{n,m} \Delta_{n} \nonumber\\
			\text{s.t.} \quad 
			& \sum_{n=1}^{N} \theta_{n,m} = 1, \forall m \in \overline{\mathcal{M}},\label{SUBP3-2a}\\
			& \sum_{m=1}^{\overline{\mathcal{M}}} \theta_{n,m} \leq 1, \forall n \in \mathcal{V},\label{SUBP3-2b}\\
			& \theta_{n,m} = \{0,1\}, \forall n \in \mathcal{V}, \forall m \in \overline{\mathcal{M}}, \label{SUBP3-2c}
		\end{align}
	\end{subequations}
	It is worth mentioning that any solution to problem $\hat{SUBP}3$ corresponds to a perfect matching of graph $\mathcal{G}$. However, problem $\hat{SUBP}3$ is still difficult to solve. By relaxing the integrality constraint (\ref{SUBP3-2c}), we can obtain the following linear programming problem:

	\begin{subequations}
		\label{SUBP3-3}
		\begin{align}
			\mathcal{\overline{SUBP}}3:\quad  &\underset{\mathbf{\theta}}{\max} \quad \sum_{n=1}^N\sum_{m=1}^{M}\theta_{n,m} \Delta_{n} \nonumber\\
			\text{s.t.} \quad &(\ref{SUBP3-2a}), (\ref{SUBP3-2b}) \nonumber\\
			& 0 \leq \theta_{n,m} \leq 1, \forall n \in \mathcal{V}, \forall m \in \overline{\mathcal{M}}.\nonumber
		\end{align}
	\end{subequations}
	
	Problem $\hat{SUBP}$3 is the linear programming relaxation of problem $\hat{SUBP}3$, which can be solved by using the interior point method with time complexity of $\mathcal{O}((NM)^{3.5})$ since it has $NM$ variables\cite{karmarkar1984new}.
     \begin{figure*}[t]
        \centering
        \subfloat[Energy after solving $\mathcal{SUBP}$2.]{
            \includegraphics[width=0.32\textwidth]{./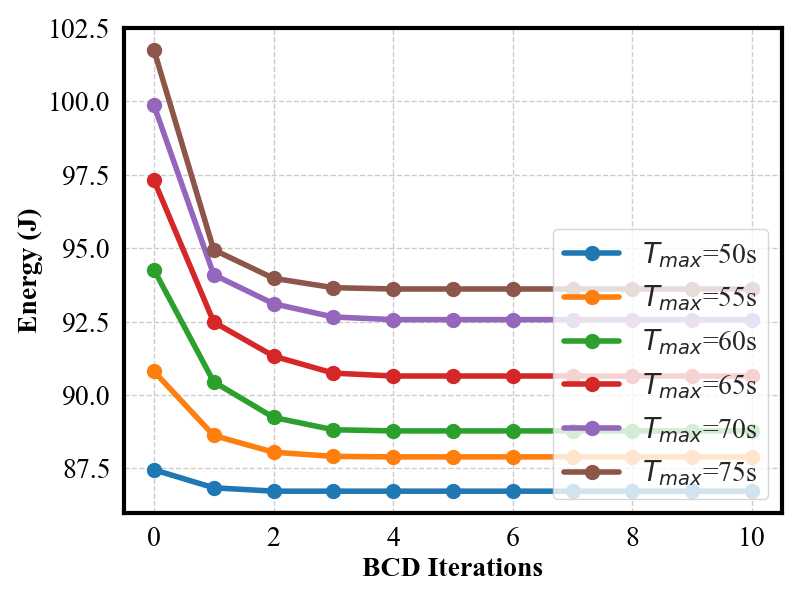}
            \label{fig:resource1}
        }
            \subfloat[Energy after solving $\mathcal{SUBP}$3.]{
            \includegraphics[width=0.30\textwidth]{./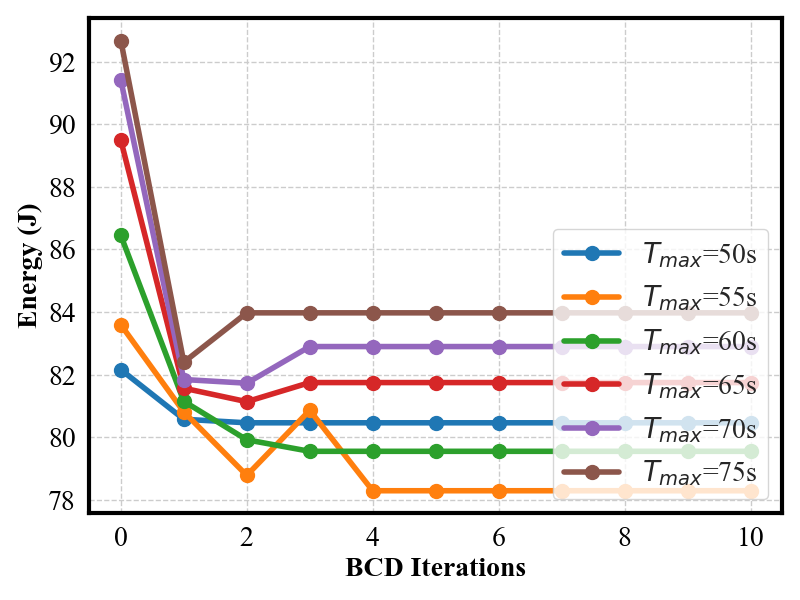}
            \label{fig:resource2}
        }
        \subfloat[Efficiency after solving $\mathcal{SUBP}$3.]{
            \includegraphics[width=0.30\textwidth]{./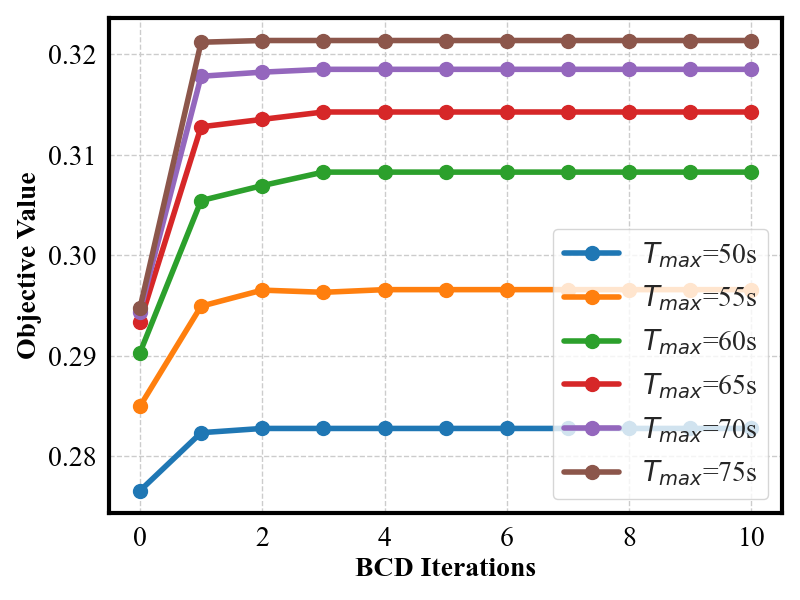}
            \label{fig:resource3}
        }
                \hfill
        \subfloat[Energy versus $T_{max}$.]{
            \includegraphics[width=0.30\textwidth]{./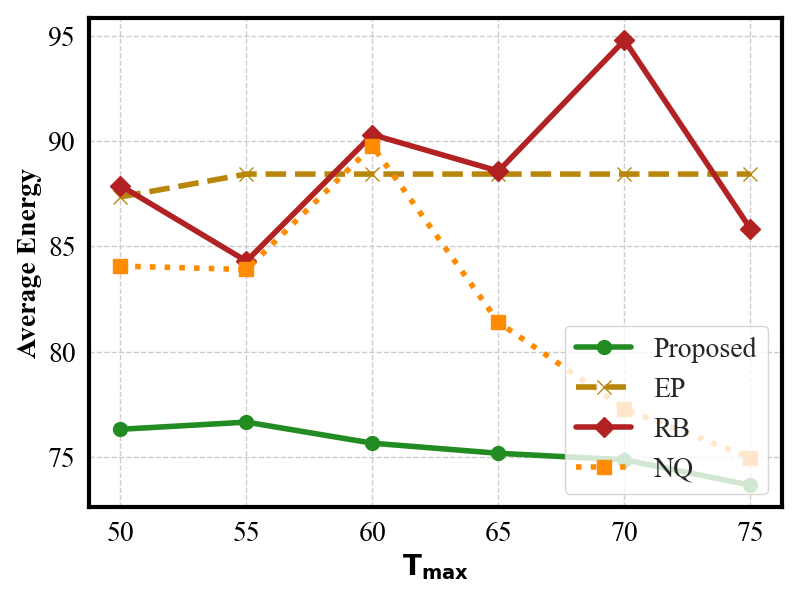}
            \label{fig:resource4}
        }
            \subfloat[Efficiency versus $T_{max}$.]{
            \includegraphics[width=0.30\textwidth]{./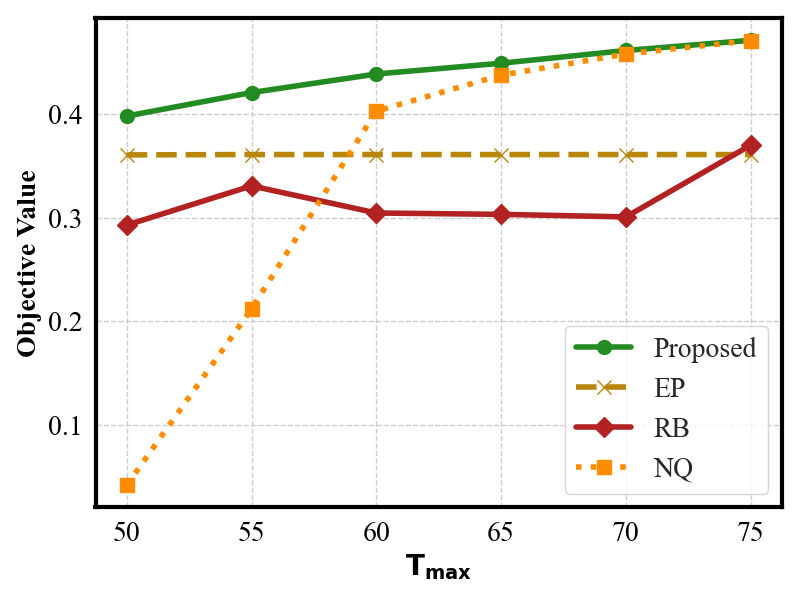}
            \label{fig:resource5}
        }
        \subfloat[Quantization bits versus $T_{max}$.]{
            \includegraphics[width=0.30\textwidth]{./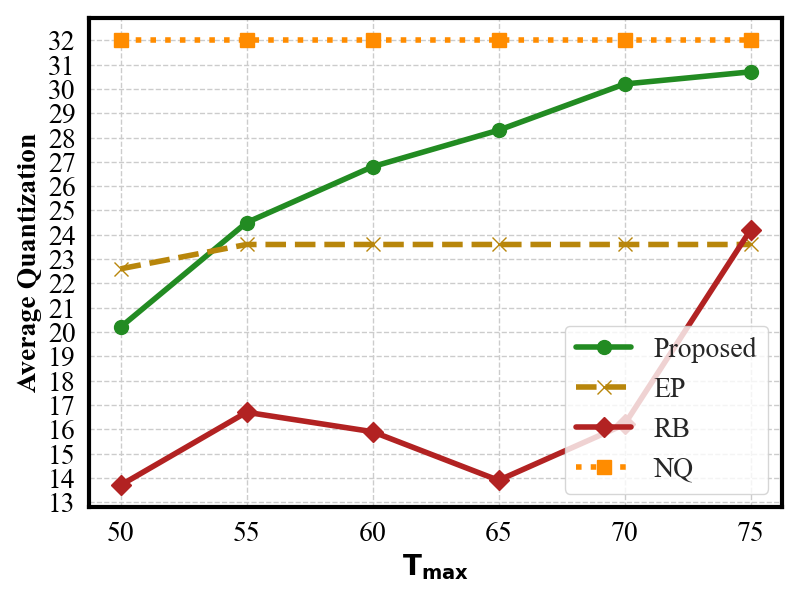}
            \label{fig:resource6}
        }
        \caption{Average energy consumption and objective value after solving three subproblems under different $T_{max}$.}
        \label{fig:resource_allocation1}
    \end{figure*}	  
     \subsection{Joint BCD Algorithm and Complexity Analysis}
    Although there is not a closed-form solution for the optimal power and wireless resource allocation, the block coordinate descent (BCD) approach can be used to find the optimal solutions. In Algorithm \ref{alg4}, $i$ initially define as $i=0$. We solve $\mathcal{SUBP}1$, the optimal uplink transmission power $\mathbf{p}^{i}$ is obtained by fixing $\mathbf{q}^{i}, \mathbf{z}^{i}$ in the $i$-th iteration. The optimal quantization bits $\mathbf{q}^{i}$ is calculated by given $\mathbf{p}^{i},\mathbf{z}^{i-1}$. Then the bandwidth allocation $\mathbf{z}^{i}$ can be calculated based on $\mathbf{p}^{i},\mathbf{q}^{i}$ by solving $\mathcal{SUBP}3$. The loops end until the differences meet the threshold requirement $\epsilon_1, \epsilon_2, \epsilon_3$.\par
    The computation complexity of Algorithm \ref{alg4} mainly composed with the three sub-problems. According to \cite{boyd2004convex}, the $\mathcal{SUBP}1$ use the SCA method, and the computational complexity is $\mathcal{O}(I_{SCA}N^3))$, where $N$ is the number of variables. The complexity of $\mathcal{SUBP}2$ using greedy algorithm is $\mathcal{O}(N)$. The $\mathcal{SUBP}3$ is solved using the interior point method, and its complexity is $\mathcal{O}((NM)^{3.5})$ due to having $NM$ variables \cite{karmarkar1984new}. Therefore, the overall computation complexity of the overall algorithm is $\mathcal{O}(I_{BCD}*(I_{SCA}N^3+N+(NM)^{3.5})$, where $I_{BCD}$ is the number of iterations of BCD algorithm.
    
\section{Simulation Results}\label{section7}
    \subsection{Simulation Setup}
        \subsubsection{Experimental Setting for Fine-Tuning and Datasets}
        We utilize the ViT-Base/32 model for the distributed image classification fine-tuning task, with the pre-trained model downloaded from timm. The learning rate set to $0.01$ and a batch size of $128$. Our simulation leverages two image classification datasets: (1) the CIFAR-10 dataset \cite{krizhevsky2009learning}, containing colored images categorized into 10 classes. (2) the CIFAR-100 dataset \cite{krizhevsky2009learning}, containing colored images categorized into 100 classes. Notably, data distribution at vehicles is non-IID, which widely exists in practical systems. We realize data heterogeneity using Dirichlet distribution with parameter $\alpha$, where lower $\alpha$ results into more heterogeneous data partitions.
        
        \subsubsection{Experimental Setting for Communication and Computation Model}  
        The considered distributed network consists of $50$ mobile users randomly distributed within a range of 50 to 1000 m. The maximum transmission power for the devices is set between 0.5 and 2 W. The maximum GPU frequencies for the devices and the server are selected from the intervals $[1.0, 1.5]$ GHz and 3 GHz, respectively. There are $M = 20$ subcarriers available for assignment, with each subchannel having a bandwidth of 20 MHz. We assume that the mobile devices are equipped with GPUs, such as the Apple A15, which features 4 to 6 cores. Accordingly, the number of GPU cores on the device side, denoted as $C_n$, is selected from the range $[4, 6]$. The number of FLOPs per cycle per core, $D_n$, is set to 1 for all devices. The edge server is assumed to be equipped with advanced GPUs, such as the NVIDIA Tesla T4 and Tesla V100. Therefore, the number of GPU cores on the server side, denoted as $C_s$, is randomly assigned values from the interval $[2560, 5120]$, and the number of FLOPs per cycle per core, $D_s$, is chosen between 1 and 2.
        \begin{figure*}[t]
            \centering
            \subfloat[{Computation Time}]{
                \includegraphics[width=0.30\textwidth]{./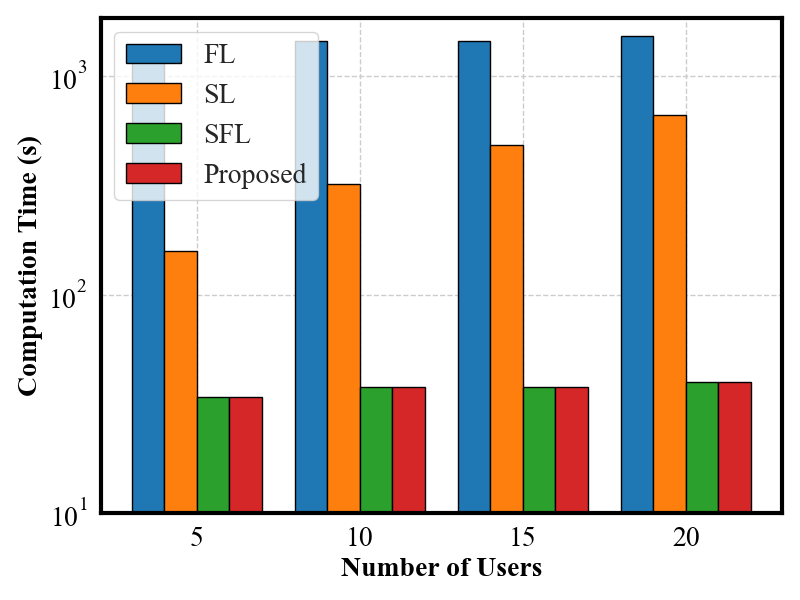}
                \label{fig:resource7}
            }
            \subfloat[{Communication Time}]{
                \includegraphics[width=0.30\textwidth]{./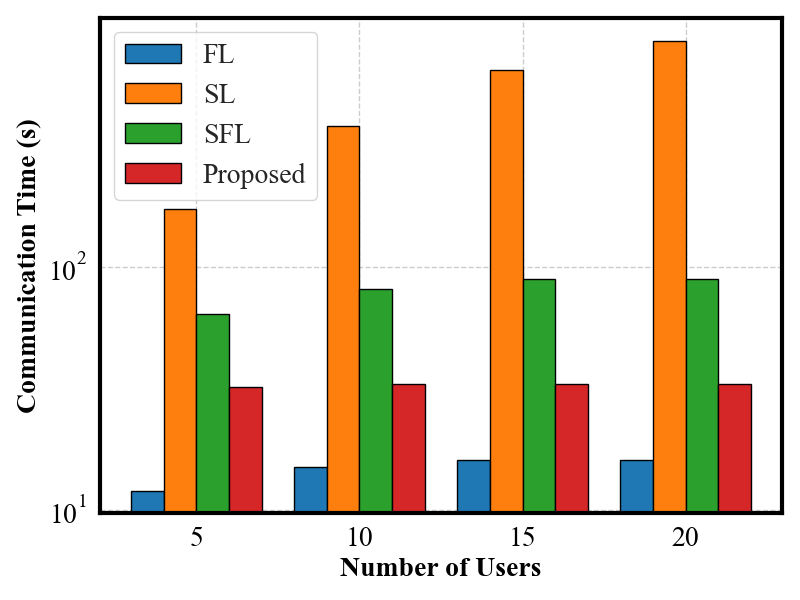}
                \label{fig:resource9}
            }
            \subfloat[{Overall Time}]{
                \includegraphics[width=0.30\textwidth]{./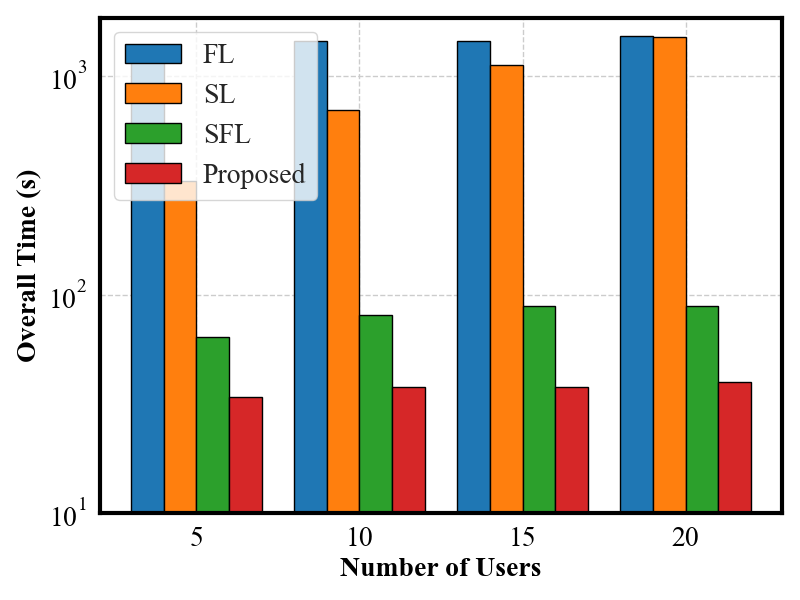}
                \label{fig:resource11}
            }
            \hfill
            \subfloat[{Computation Energy}]{
                \includegraphics[width=0.30\textwidth]{./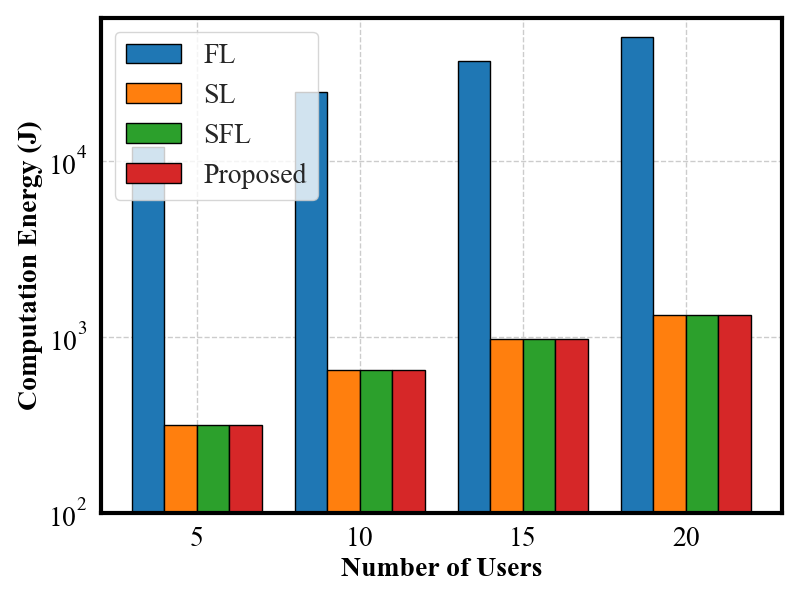}
                \label{fig:resource8}
            }
            \subfloat[{Communication Energy}]{
                \includegraphics[width=0.30\textwidth]{./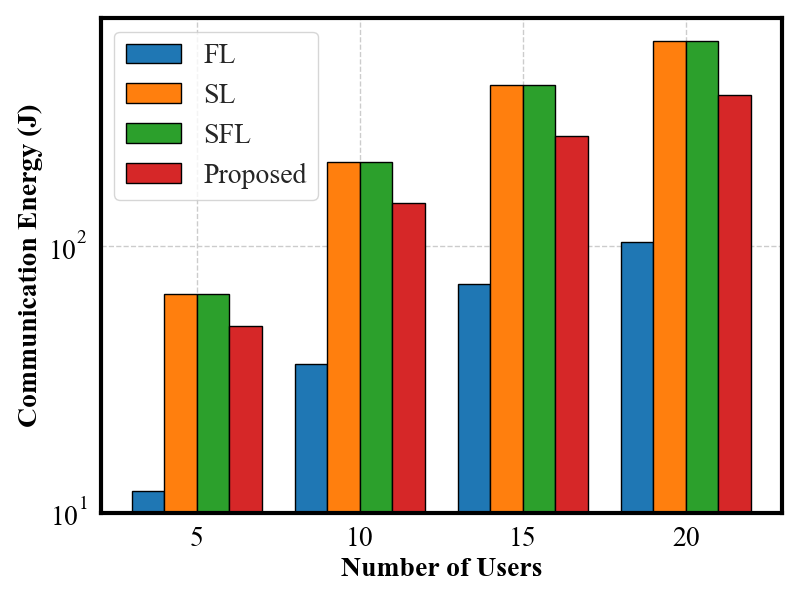}
                \label{fig:resource10}
            }
            \subfloat[{Overall Energy}]{
                \includegraphics[width=0.30\textwidth]{./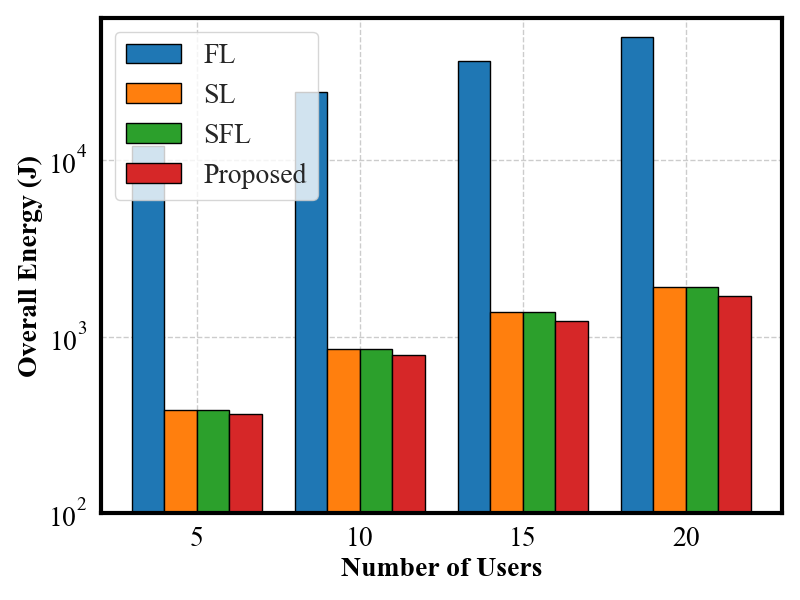}
                \label{fig:resource12}
            }
            \caption{Time and Energy Consumption}
            \label{fig:resource_allocation2}
        \end{figure*}     
    \subsection{Efficiency Evaluation of Resource Management}
    Fig. \ref{fig:resource_allocation1}\subref{fig:resource1}, \ref{fig:resource_allocation1}\subref{fig:resource2}, and  \ref{fig:resource_allocation1}\subref{fig:resource3} illustrate the convergence analysis of the BCD iteration method proposed in this study at different stages. Fig.  \ref{fig:resource_allocation1}\subref{fig:resource1} specifically shows the variation in average energy consumption across different BCD iterations under random $T_{max}$ constraints, after solving $\mathcal{SUBP}$1 and $\mathcal{SUBP}$2. The results indicate that a larger time constraint $T_{max}$ expands the optimization space, which in turn helps to reduce energy consumption more effectively. After solving the bandwidth allocation problem $\mathcal{SUBP}$3, the average energy consumption decreases significantly compared to $\mathcal{SUBP}$2 as shown in Fig. \ref{fig:resource_allocation1}\subref{fig:resource2}. Notably, when the time constraint is $T_{max} = 50s$, the number of devices meeting the constraints is fewer than the number of subchannels, resulting in higher average energy consumption. However, when $T_{max}$ is increased to $55s$ or $60s$, more devices are able to participate in the bandwidth allocation, thus lowering the average energy consumption. According to our objective function (\ref{eq-objective-function}), which aims to maximize system efficiency, Fig. \ref{fig:resource_allocation1}\subref{fig:resource3} shows that as the time constraint $T_{max}$ increases, devices can achieve higher quantization bits while maintaining similar energy expenditure, ultimately enhancing system efficiency. Therefore, a higher $T_{max}$ increases devices participation and quantization bits, optimizing the overall system performance.\par
    Fig. \ref{fig:resource_allocation1}\subref{fig:resource4}, Fig. \ref{fig:resource_allocation1}\subref{fig:resource5} and Fig. \ref{fig:resource_allocation1}\subref{fig:resource6} shows the average energy, objective value and average quantization vary as the time constraint $T_{max}$ changes in our proposed and three different resource allocation methods:
        \begin{itemize}
            \item Equal uplink transmit power (EP): With the uplink power fixed ($1.5$W), we focus solely on optimizing the wireless resource allocation and quantization management.
            \item Radom bandwidth allocation (RB): At each round, randomly selected devices are assigned subchannels with optimal power control and quantization management.
            \item No quantization management (NQ): The scheme is without quantization management ($32$bits) while with optimal bandwidth allocation and power control. 
        \end{itemize}
    Fig. \ref{fig:resource_allocation1}\subref{fig:resource4} shows that our proposed method achieves the lowest energy consumption. Comparing the NQ with the proposed methods, we observe that when $T_{max}$ is less than $60$s, fewer devices can meet the time constraints, leading to higher energy consumption in NQ. This suggests that the importance of quantization management increases, especially when time constraints are tight. The uplink power in EP is consistently set to $1.5$W, which results in higher energy consumption. While a higher uplink power enables more devices to meet the time constraint, the impact of quantization and resource allocation diminishes. Fig. \ref{fig:resource_allocation1}\subref{fig:resource5} demonstrates that our proposed method achieves the highest objective efficiency value. Comparing the NQ with the proposed methods, we find that quantization management performs well, particularly when time constraints are stringent. However, when the uplink power in EP is high, the effect of quantization management and bandwidth allocation is less significant. Finally, Fig. \ref{fig:resource_allocation1}\subref{fig:resource6} illustrates the average quantization bits. As shown, with an increase in $T_{max}$, the proposed method results in higher quantization bits.
    \subsection{Time-Energy Evaluation of SFLAM}
    Fig. \ref{fig:resource_allocation2} shows the time and energy consumption in one training round as the number of devices changes in our proposed and three different distributed training framework:
        \begin{itemize}
            \item Federated learning (FL): Devices train local models and aggregate updates into a global model, assuming devices have sufficient memory.
            \item Split learning (SL): Devices and the server sequentially train different model parts, collaborating at each step.
            \item Split federated learning (SFL): Devices and the server train model parts in parallel, aggregating both sides.
        \end{itemize}
            \begin{figure*}[htbp]
        \centering
        \subfloat[CIFAR-10 Dir(0.1)]{
            \includegraphics[width=0.30\textwidth]{./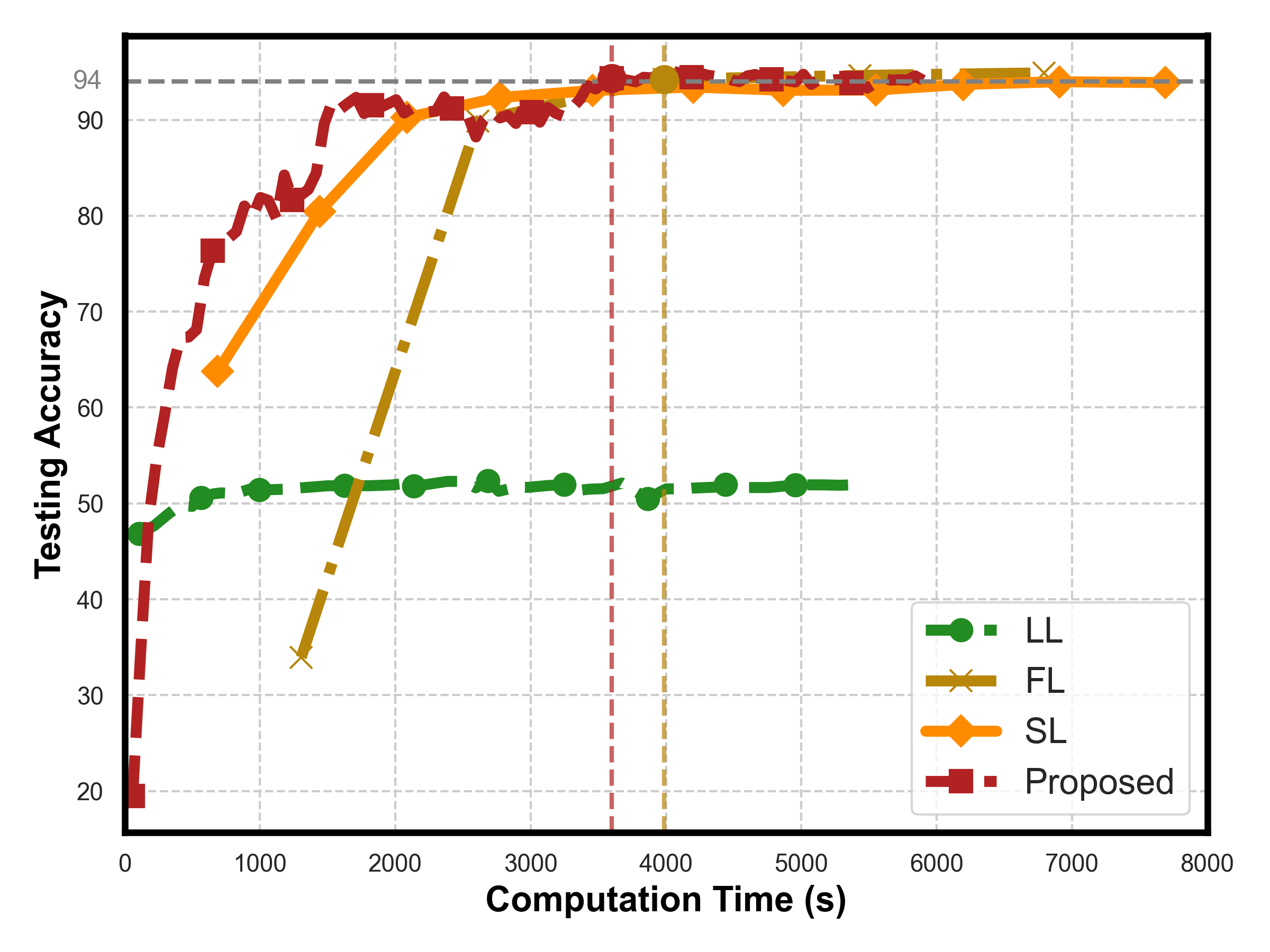}
            \label{fig:resource13}
        }
        \subfloat[CIFAR-10 Dir(0.3)]{
            \includegraphics[width=0.30\textwidth]{./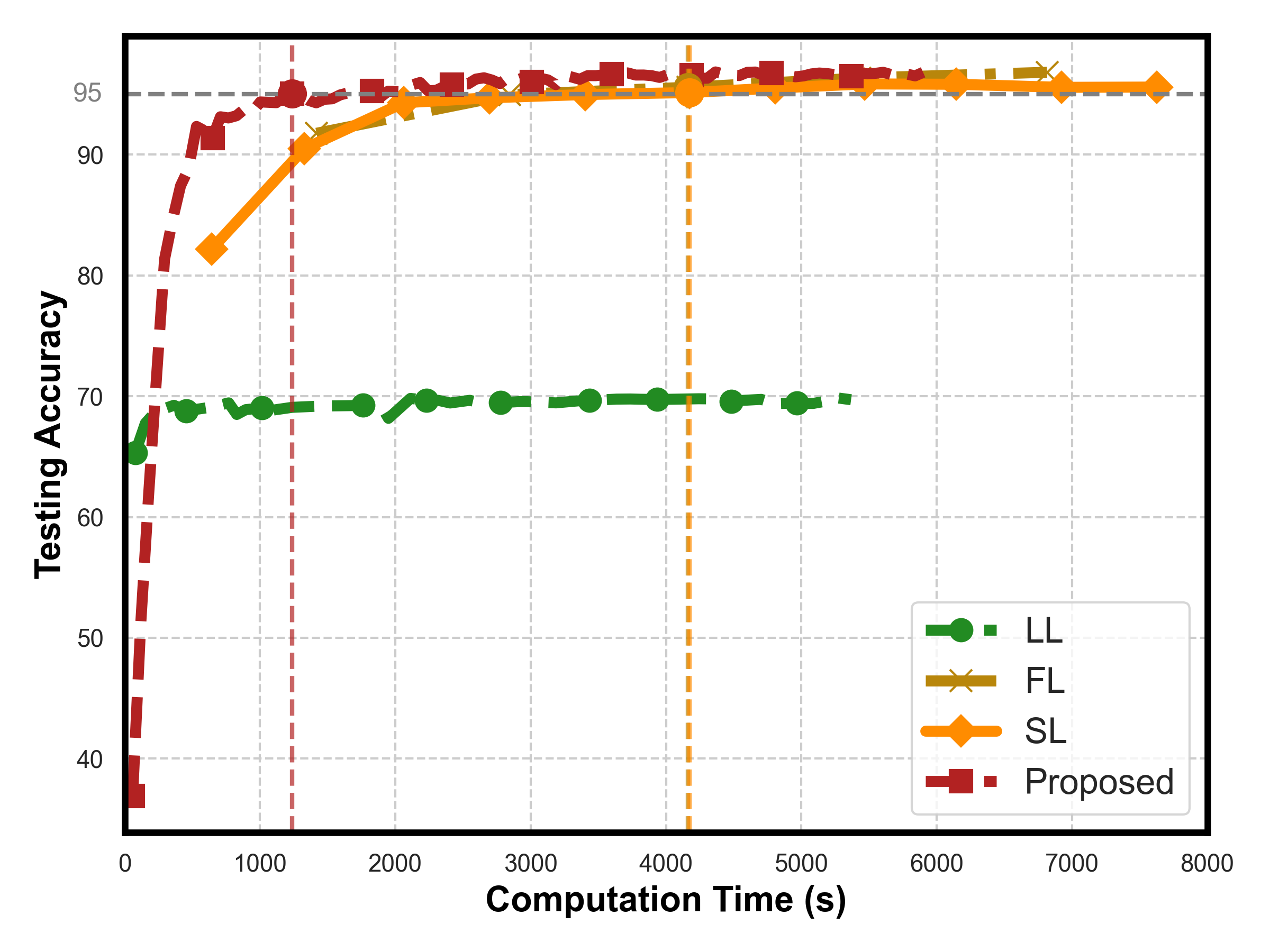}
            \label{fig:resource15}
        }
        \subfloat[CIFAR-10 Dir(0.5)]{
            \includegraphics[width=0.30\textwidth]{./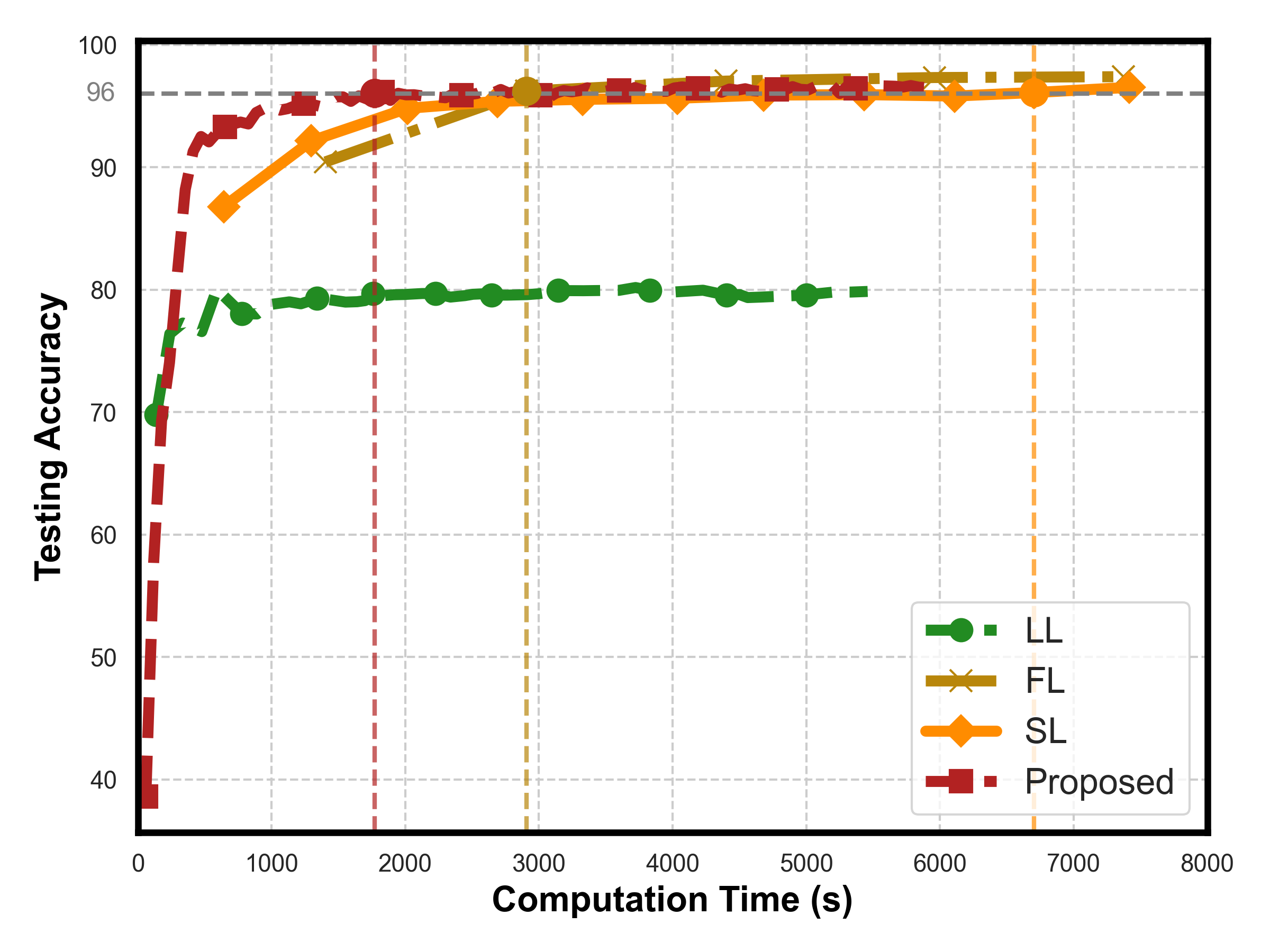}
            \label{fig:resource17}
        }
        \hfill
        \subfloat[CIFAR-100 Dir(0.1)]{
            \includegraphics[width=0.30\textwidth]{./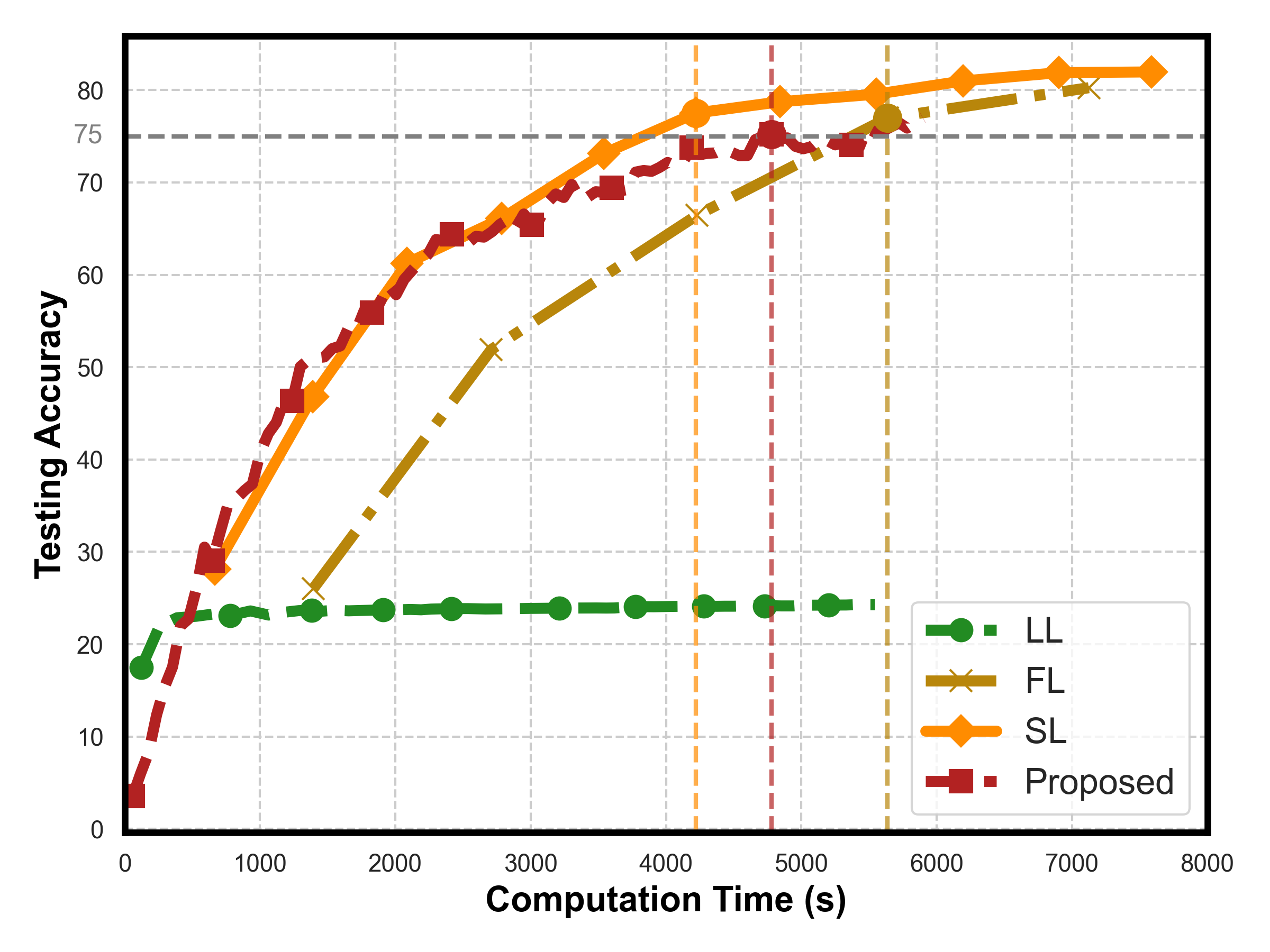}
            \label{fig:resource14}
        }        
        \subfloat[CIFAR-100 Dir(0.3)]{
            \includegraphics[width=0.30\textwidth]{./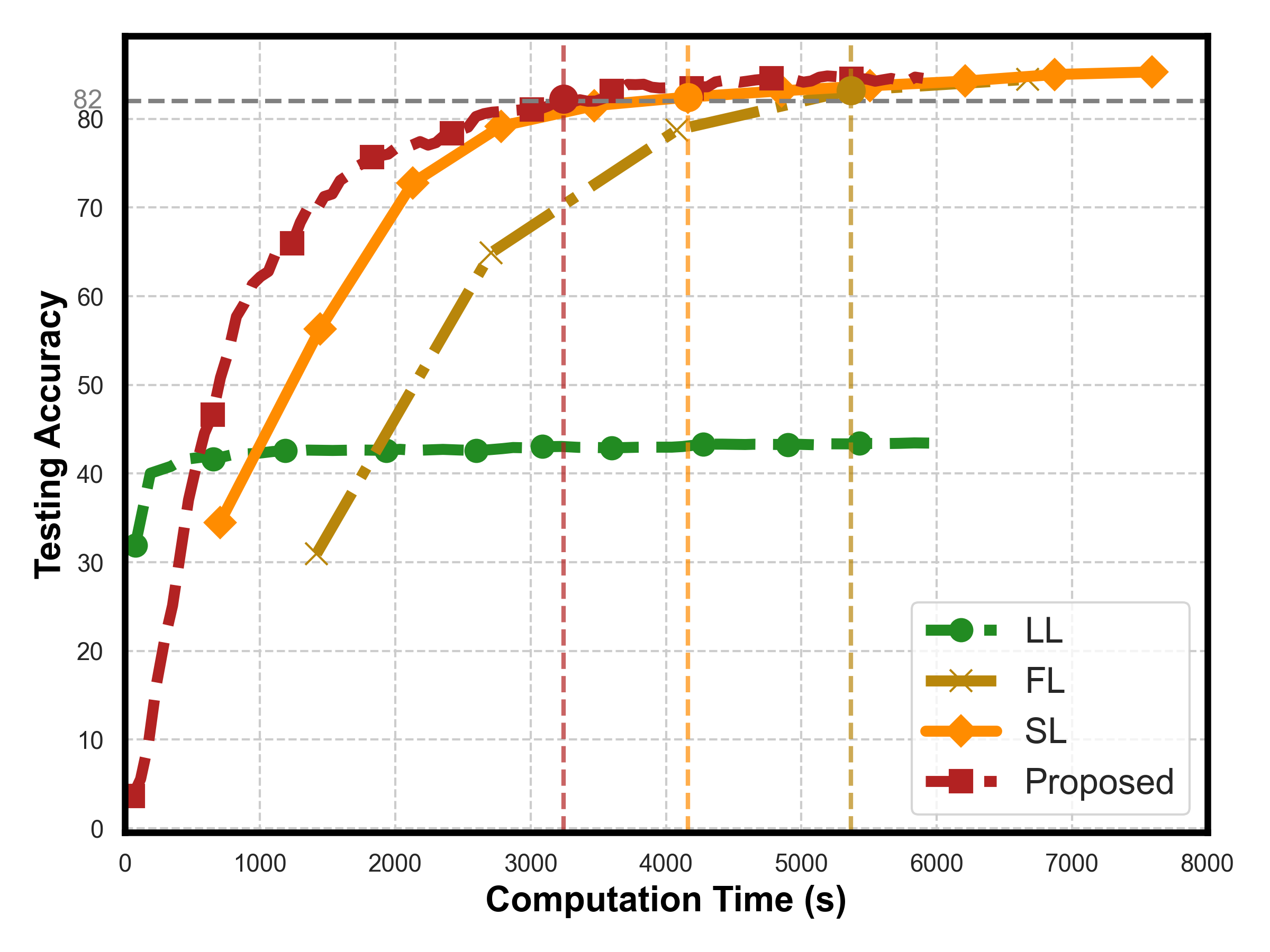}
            \label{fig:resource16}
        }
        \subfloat[CIFAR-100 Dir(0.5)]{
            \includegraphics[width=0.30\textwidth]{./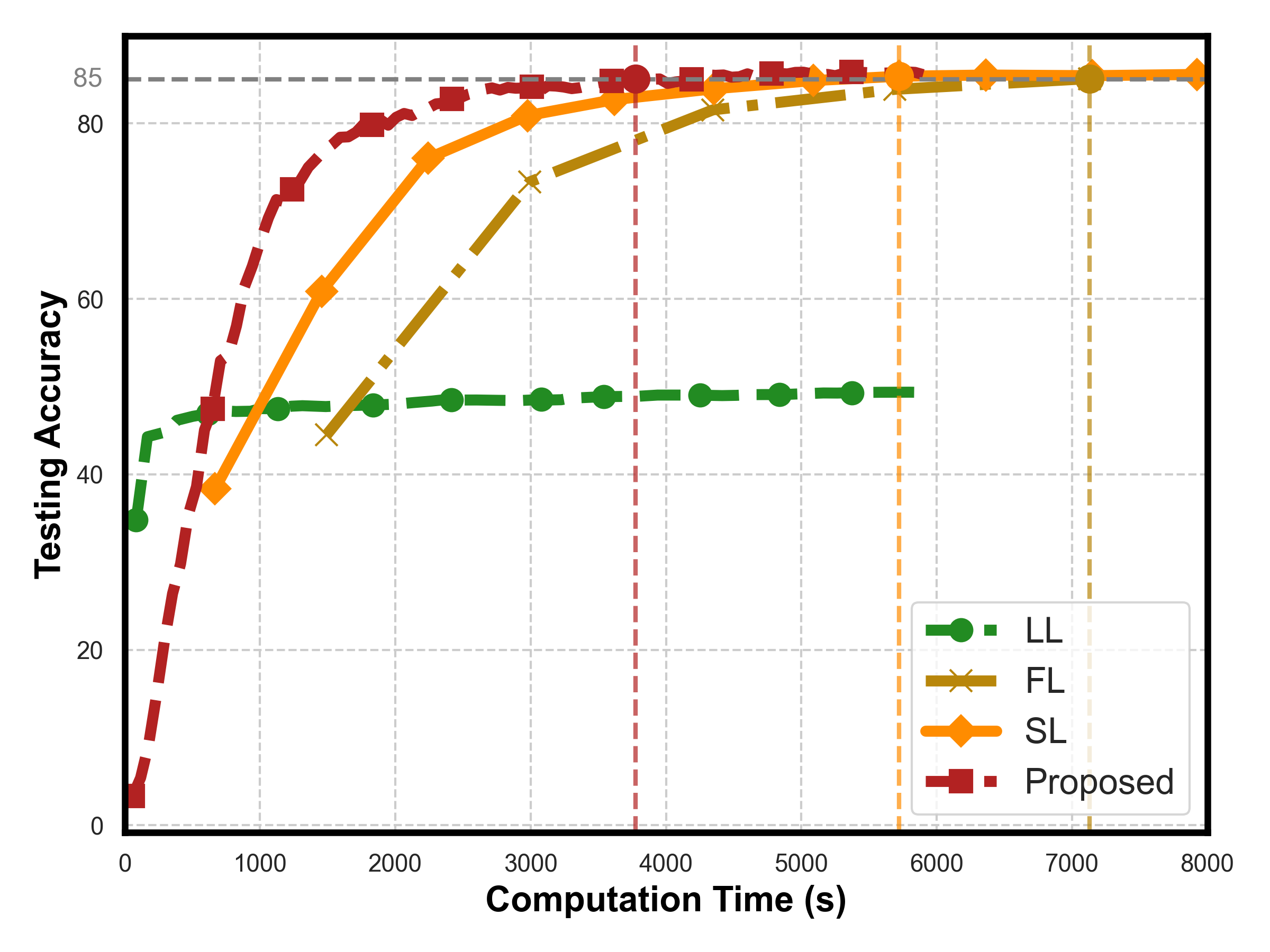}
            \label{fig:resource18}
        }
        \caption{Testing accuracy on Vit-Base/32 Model}
        \label{fig:acc_compare3}
    \end{figure*}   
    Fig. \ref{fig:resource_allocation2}\subref{fig:resource7} shows the computation time required for one training round using different methods. When devices possess adequate memory for the global model in FL, FL experiences the most significant latency due to the large Language Adaptation Model (LAM) and the limited computational capacity of the devices. In contrast, Split Learning (SL) achieves a reduction in device-side computation time while its sequential nature leads to increased overall delays. Due to parallel training on devices, SFL and proposed exhibit the shortest computation time. Fig. \ref{fig:resource_allocation2}\subref{fig:resource8} shows the computation energy consumption required for one training round on devices using different methods. FL has the highest energy consumption because the entire model is trained on the devices, while other frameworks consume less energy since only a portion of the model is trained on the devices.\par
    Fig. \ref{fig:resource_allocation2}\subref{fig:resource9} and \ref{fig:resource_allocation2}\subref{fig:resource10} show the communication time and energy required for one training round with different methods, respectively. FL incurs the least communication time and energy, while SL results in the longest due to its sequential nature. Compared to FL, SFL still demands more communication time and energy because it involves the additional transmission of intermediate parameters, such as activations and gradients, which are positively correlated with the amount of data. By incorporating quantization management, the proposed method significantly reduces both communication time and energy.\par
    Fig. \ref{fig:resource_allocation2}\subref{fig:resource11} and \ref{fig:resource_allocation2}\subref{fig:resource12} show the total time and energy consumption, respectively. FL consumes the most time, while the proposed method requires the least time. Although SL and SFL have longer communication times than FL, the overall time of FL is still higher, indicating that SL and SFL trade communication time for computation time. Through quantization management, the proposed method achieves the lowest time and energy consumption, demonstrating the effectiveness of the resource management strategy.

    \subsection{Performance Evaluation of SFLAM Fine-tuning}
    Fig. \ref{fig:acc_compare3} illustrates the test accuracy of the ViT-Base/32 model under various Dirichlet data distributions. The results demonstrate that the proposed SFLAM method achieves the fastest convergence. While SL also performs well, its sequential training process leads to considerable time delays. FL consistently yields the lowest accuracy, as training the entire model on terminal devices imposes a heavy computational burden, leading to prolonged training time. Local Learning (LL), where devices train models on their local datasets, results in the worst accuracy due to data heterogeneity. The proposed method achieves test accuracies of $94\%$, $95\%$, and $96\%$ on the CIFAR-10 dataset under Dir(0.1), Dir(0.3), and Dir(0.5), respectively. For CIFAR-100, the test accuracies under Dir(0.1), Dir(0.3), and Dir(0.5) are $75\%$, $82\%$, and $85\%$, respectively. As the Dirichlet parameter $\alpha$ increases, the test accuracy improves. The SFL-based framework still faces challenges associated with non-IID data, though these challenges are less pronounced compared to FL. In cases of complex and heterogeneous datasets, as shown in Fig. \ref{fig:acc_compare3}\subref{fig:resource14}, SL outperforms the proposed method, suggesting that SL has a strong advantage in mitigating data heterogeneity.

\section{Conclusion}
    \label{section8}
    This paper introduces Quantized Split Federated Fine-Tuning Large AI Model (SFLAM), an innovative framework tailored for deploying large AI models in resource-constrained edge environments. SFLAM strategically partitions model training between edge devices and a central server, incorporating quantization management, power control, and optimized bandwidth allocation. This approach significantly enhances training performance while minimizing energy consumption and communication latency. Both theoretical analysis and simulation results demonstrate SFLAM's superior learning efficiency and system scalability compared to existing methods, particularly in resource-limited settings. Future research can further explore the applicability of SFLAM in more complex scenarios and optimize its design under varying network conditions to facilitate broader real-world deployment.\par
\appendix
\subsection{Proof of Lemma \ref{Lemma-2}}
\label{proof-lemma2}
    We take the expectation over the random data distribution variable $\xi_n$ and also over the quantization error $\mathcal{Q}$:
	\begin{align}
		&\mathbb{E}_{\xi_n \sim \mathcal{D}_n} \mathbb{E}_Q [\nabla L_n(\boldsymbol{\omega}, \mathcal{Q}(\mathcal{A}_n(\xi_n)))] \nonumber \\
	\stackrel{(a)}{=}& \mathbb{E}_{\xi_n \sim \mathcal{D}_n}[\nabla L_n(\boldsymbol{\omega}, \mathcal{A}_n(\xi_n))]\nonumber \\
        &+ \mathbb{E}_{\xi_n \sim \mathcal{D}_n} \mathbb{E}_Q[\nabla^2 L_n(\boldsymbol{\omega}, \mathcal{A}_n(\xi_n))(\mathcal{Q}(\mathcal{A}_n(\xi_n)) - \mathcal{A}_n(\xi_n))],\nonumber \\
        &+\mathcal{O}(\|\mathcal{Q}(\mathcal{A}_n(\xi_n))-\mathcal{A}_n(\xi_n)\|^2),\nonumber\\
	=& \nabla L_n(\boldsymbol{\omega}, \mathcal{A}_n) \nonumber\\
        &+ \mathbb{E}_{\xi_n \sim \mathcal{D}_n} \mathbb{E}_Q \left[ \nabla^2 L_n(\boldsymbol{\omega}, \mathcal{A}_n(\xi_n))(\mathcal{Q}(\mathcal{A}_n(\xi_n)) - \mathcal{A}_n(\xi_n)) \right].\nonumber\\
        \stackrel{(b)}{\leq}& \nabla L_n(\boldsymbol{\omega}, \mathcal{A}_n)  + \mathcal{L} \cdot \| \mathcal{Q}(\mathcal{A}_n) - \mathcal{A}_n \|,\label{lemma2-1}
	\end{align}
	where the equality (a) is due to expand the gradient of the loss function $ L_n $ using a first-order Taylor expansion at $\mathcal{A}_n $, $\nabla^2 L_n(\boldsymbol{\omega}, \mathcal{A}_n) $ is the Hessian of $L_n$ with respect to $ \mathcal{A}_n$ and $ \nabla L_n(\boldsymbol{\omega},\mathcal{A}_n) $ represents the gradient of $L_n$ at the point $ \mathcal{A}_n $. where inequality (b) invoke the assumption \ref{assumption6} of $ \mathcal{L} $-smoothness for the second-order derivatives of the loss function, which states that $ \| \nabla^2 L_n(\boldsymbol{\omega}, \mathcal{A}_n) \| \leq \mathcal{L} $. This implies that the gradient variation due to the quantization error is bounded by:
	\begin{equation}
		\| \nabla^2 L_n(\boldsymbol{\omega}, \mathcal{A}_n)(\mathcal{Q}(\mathcal{A}_n) - \mathcal{A}_n) \| \leq \mathcal{L} \cdot \| \mathcal{Q}(\mathcal{A}_n) - \mathcal{A}_n \|. \nonumber
		\label{lemma2-quanerror}
	\end{equation}
	Substituting the bound into the previous expression gives the variance of the stochastic gradients:
	\begin{align}
		&\mathbb{E}_{\xi_n \sim \mathcal{D}_n} \mathbb{E}_Q \left[ \| \nabla L_n(\boldsymbol{\omega}, \mathcal{Q}(\mathcal{A}_n(\xi_n))) - \nabla L_n(\boldsymbol{\omega}) \|_2^2 \right] \notag \\
		=& \mathbb{E}_{\xi_n \sim \mathcal{D}_n} \mathbb{E}_Q \Big[ \| \nabla L_n(\boldsymbol{\omega}, \mathcal{Q}(\mathcal{A}_n(\xi_n))) - \nabla L_n(\boldsymbol{\omega}, \xi_n) \notag \\
		&+ \nabla L_n(\boldsymbol{\omega}, \xi_n) - \nabla L_n(\boldsymbol{\omega}) \|_2^2 \Big] \notag \\
		\stackrel{(a)}{=}& \mathbb{E}_{\xi_n \sim \mathcal{D}_n} \mathbb{E}_Q \left[ \| \nabla L_n(\boldsymbol{\omega}, \mathcal{Q}(\mathcal{A}_n(\xi_n))) 
		- \nabla L_n(\boldsymbol{\omega}, \xi_n) \|_2^2 \right] \notag \\
		&+ \mathbb{E}_{\xi_n \sim \mathcal{D}_n} \left[ \|  \nabla L_n(\boldsymbol{\omega})-\nabla L_n(\boldsymbol{\omega}, \xi_n) \|_2^2 \right] \notag \\
		&+ 2 \cdot \mathbb{E}_{\xi_n \sim \mathcal{D}_n} \mathbb{E}_Q \Big[ \big\langle \nabla L_n(\boldsymbol{\omega}, \nabla L_n(\boldsymbol{\omega}, \xi_n) - \nabla L_n(\boldsymbol{\omega}, \xi_n) ), \notag \\
		&\quad \nabla L_n(\boldsymbol{\omega}, \xi_n) - \nabla L_n(\boldsymbol{\omega}) \big\rangle \Big].\notag\\
		\stackrel{(b)}{=}&\mathbb{E}_{\xi_n \sim \mathcal{D}_n} \mathbb{E}_Q \left[ \| \nabla L_n(\boldsymbol{\omega}, \mathcal{Q}(\mathcal{A}_n(\xi_n))) 
		- \nabla L_n(\boldsymbol{\omega}, \xi_n) \|_2^2 \right] \notag\\
		&+ \mathbb{E}_{\xi_n \sim \mathcal{D}_n} \left[ \| \nabla L_n(\boldsymbol{\omega}, \xi_n) - \nabla L_n(\boldsymbol{\omega}) \|_2^2 \right] \notag \\
		\stackrel{(c)}{=}&\mathcal{L}^2\cdot \delta A^2 +\sigma_n^2,\nonumber
	\end{align}
	where the equality (a) is due to $\|a-b\|_2^2=\|a\|_2^2+2 \big\langle a,b \big\rangle + \|b\|_2^2$. The equality (b) is due to $\mathbb{E}_{\xi_n \sim \mathcal{D}_n}\left[\nabla L_n(\boldsymbol{\omega}, \xi_n) - \nabla L_n(\boldsymbol{\omega})\right]=0$, so that $2\langle a,b \rangle=0$. The  equality (c) is obtained from Lemma \ref{lemma1}, Assumption \ref{assump3} and (\ref{lemma2-1}), and $A^2=\mathcal{L}^2\cdot \delta \|\mathcal{A}_n(\xi_n)\|^2$. Here, $\sigma_n^2$ represents the variance of the gradient due to randomness in $\xi_n$, and $\mathcal{L}^2\cdot \delta A^2$ is the additional variance introduced by the quantization error.

\subsection{Proof of Lemma \ref{Lemma-3}}
    	\label{proof-lemma3}
	Taking expectations over both $\xi_n \sim \mathcal{D}_n$ and the quantization error $\mathcal{Q}$, we obtain the following expression: 
\begin{align}
    &\mathbb{E}_{\xi_n \sim \mathcal{D}_n} \mathbb{E}_Q \left[ \| g_n(\boldsymbol{\omega}, \mathcal{Q}(\mathcal{A}_n(\xi_n))) \|_2^2 \right] \nonumber\\
    =&\mathbb{E}_{\xi_n \sim \mathcal{D}_n} \mathbb{E}_Q \left[ \| \nabla L_n(\boldsymbol{\omega}, \mathcal{Q}(\mathcal{A}_n(\xi_n))) \|_2^2 \right] \notag \\
    \stackrel{(a)}{=}&\mathbb{E}_{\xi_n \sim \mathcal{D}_n} \mathbb{E}_Q \left[ \| \nabla L_n(\boldsymbol{\omega}, \mathcal{A}_n(\xi_n)) \right. \nonumber\\
    &\quad + \nabla^2 L_n(\boldsymbol{\omega}, \mathcal{A}_n(\xi_n)) (\mathcal{Q}(\mathcal{A}_n(\xi_n)) - \mathcal{A}_n(\xi_n)) \|_2^2 \Big] \nonumber\\
    \stackrel{(b)}{=}& \mathbb{E}_{\xi_n \sim \mathcal{D}_n} \left[ \|\nabla L_n(\boldsymbol{\omega}, \mathcal{A}_n(\xi_n)) \|_2^2 \right] \notag \\
    &+ \mathbb{E}_{\xi_n \sim \mathcal{D}_n} \mathbb{E}_Q \left[ \| \nabla^2 L_n(\boldsymbol{\omega}, \mathcal{A}_n(\xi_n)) (\mathcal{Q}(\mathcal{A}_n(\xi_n)) - \mathcal{A}_n(\xi_n)) \|_2^2 \right] \notag \\
    &+ \mathbb{E}_{\xi_n \sim \mathcal{D}_n} \mathbb{E}_Q \left[ 2\big\langle \nabla L_n(\boldsymbol{\omega}, \mathcal{A}_n(\xi_n)),\nabla^2 L_n(\boldsymbol{\omega}, \mathcal{A}_n(\xi_n)) \right. \nonumber\\
    &\quad \left.  (\mathcal{Q}(\mathcal{A}_n(\xi_n)) - \mathcal{A}_n(\xi_n)) \big\rangle \right], \nonumber \\
    \stackrel{(c)}{=}& \mathbb{E}_{\xi_n \sim \mathcal{D}_n} \left[ \| \nabla L_n(\boldsymbol{\omega}, \mathcal{A}_n(\xi_n)) \|_2^2 \right] \nonumber \\
    &+ \mathbb{E}_{\xi_n \sim \mathcal{D}_n} \mathbb{E}_Q \left[ \| \nabla^2 L_n(\boldsymbol{\omega}, \mathcal{A}_n(\xi_n)) (\mathcal{Q}(\mathcal{A}_n) - \mathcal{A}_n(\xi_n)) \|_2^2 \right]. \nonumber
\end{align}
	where the equality (a) follows from the Taylor expansion, (b) comes from applying $\|a-b\|_2^2=\|a\|_2^2+2 \big\langle a,b \big\rangle + \|b\|_2^2$. The equality (c) results from $\mathbb{E}_{\mathcal{Q}}\left[\mathcal{Q}(\mathcal{A}_n)\right]=\mathcal{A}_n$, so that $2\langle a,b \rangle=0$. As in Lemma \ref{Lemma-2}, we use the smoothness assumption $\| \nabla^2 L_n(\boldsymbol{\omega}, \mathcal{A}_n(\xi_n)) \| \leq \mathcal{L} $ and the quantization error $\mathbb{E}\left[\|\mathcal{Q}(\mathcal{A})-\mathcal{A}\|_2^2\right]\leq \delta \|\mathcal{A}\|_2^2$ in Lemma \ref{lemma1}, which gives:
	\begin{align}
		&\mathbb{E}_Q \left[ \| \nabla^2 L_n(\boldsymbol{\omega}, \mathcal{A}_n(\xi_n)) (\mathcal{Q}(\mathcal{A}_n) - \mathcal{A}_n) \|_2^2 \right] \nonumber\\
		\leq& \mathcal{L}^2 \cdot \mathbb{E}_Q \left[ \| \mathcal{Q}(\mathcal{A}_n) - \mathcal{A}_n \|^2 \right]\nonumber\\
		\leq& \mathcal{L}^2 \cdot \delta \|\mathcal{A}_n\|^2.\nonumber
	\end{align}
	Finally, we can now bound the expected norm of the stochastic gradients as:
	\begin{align}
		&\mathbb{E}_{\xi_n \sim \mathcal{D}_n} \mathbb{E}_Q \left[ \| g_n(\boldsymbol{\omega}, \mathcal{Q}(\mathcal{A}_n(\xi_n))) \|_2^2 \right] \nonumber\\
        =&\mathbb{E}_{\xi_n \sim \mathcal{D}_n} \left[ \| \nabla L_n(\boldsymbol{\omega}, \mathcal{A}_n(\xi_n)) \|_2^2 \right] \nonumber \\
        &+ \mathbb{E}_{\xi_n \sim \mathcal{D}_n} \mathbb{E}_Q \left[ \| \nabla^2 L_n(\boldsymbol{\omega}, \mathcal{A}_n(\xi_n)) (\mathcal{Q}(\mathcal{A}_n) - \mathcal{A}_n(\xi_n)) \|_2^2 \right]. \nonumber\\
		\leq& G^2 + \mathcal{L}^2 \cdot \delta \| \mathcal{A}_n \|^2.\nonumber
	\end{align}
	This concludes the proof for Lemma \ref{Lemma-3}.
	
	\subsection{Proof of Theorem \ref{theorem1}}
    \label{proof-theorem1}
	   We can derive the upper bound as shown in (\ref{eq-theorm}), where $\delta = \frac{1+\sqrt{2d-1}}{2}\frac{1}{2^q-1}$, $A = \|\mathcal{A}_n(\xi_n)\|^2$, and $ B = \frac{8SN}{\mu^2(\gamma + k)} \sum_{n=1}^N \rho_n^2 \left( 3 + \frac{1}{a_n} \right) + \frac{768S^2}{\mu^3(\gamma + k)(\gamma + 1)} \sum_{n=1}^N 3a_n$. The detailed steps are similar in \cite{han2024convergence}.
    \begin{figure*}
    \begin{align}
        \label{eq-theorm}
        &\mathbb{E}[\mathbf{L}(\boldsymbol{\omega}_k)] - \mathbf{L}(\boldsymbol{\omega}^*)\\
        \leq & \frac{S}{2} \left( \mathbb{E}[|\boldsymbol{\omega}_k^{\mathcal{S}} - \boldsymbol{\omega}^{\mathcal{S}}|^2] + \mathbb{E}[|\boldsymbol{\omega}_k^{\mathcal{C}} - \boldsymbol{\omega}^{\mathcal{C}*}|^2] \right)\nonumber\\
        \leq & \frac{8 S N \sum_{n=1}^N \rho_n^2\left(2 (\sigma_n^2+\mathcal{L}^2 \cdot \delta \| \mathcal{A}_n \|^2)+(G^2+\mathcal{L}^2 \cdot \delta \| \mathcal{A}_n \|^2)+\frac{G^2+\mathcal{L}^2 \cdot \delta \| \mathcal{A}_n \|^2}{a_n}\right)}{\mu^2(\gamma+T)} \nonumber\\
        &+\frac{768 S^2 \sum_{n=1}^N \rho_n\left(2 (\sigma_n^2+\mathcal{L}^2 \cdot \delta \| \mathcal{A}_n \|^2)+(G^2+\mathcal{L}^2 \cdot \delta \| \mathcal{A}_n \|^2)\right)}{\mu^3(\gamma+T)(\gamma+1)} + \frac{S(\gamma+1) \mathbb{E}\left[\left\|\boldsymbol{x}^0-\boldsymbol{x}^*\right\|^2\right]}{2(\gamma+T)}\nonumber\\
        \leq& \frac{8SN}{\mu^2(\gamma + k)} \left( \sum_{n=1}^N \rho_n^2 \left[ 2\sigma_n^2 + G^2 + \frac{G^2}{a_n} \right] \right) + \frac{768S^2}{\mu^3(\gamma + k)(\gamma + 1)} \left( \sum_{n=1}^N \rho_n \left[ 2\sigma_n^2 + G^2 \right] \right) + \frac{S(\gamma + 1)}{2(\gamma + K)} \left( \mathbb{E}[\|\boldsymbol{\omega}_0 - \boldsymbol{\omega}^{*}\|^2] \right) + \mathcal{L}^2 \cdot \delta A^2 B. \nonumber
    \end{align}
      {\noindent} \rule[-10pt]{18cm}{0.05em}
    \end{figure*}
	

	\bibliographystyle{IEEEtran}
	\bibliography{main.bib}

\end{document}